\documentclass[sigconf]{acmart}

\usepackage{caption}
\usepackage{subcaption}
\usepackage[linesnumbered,ruled]{algorithm2e}
\usepackage{booktabs} 
\usepackage{paralist}
\usepackage{bm}
\usepackage{mathtools}
\usepackage{xspace}
\usepackage{enumitem}
\usepackage{comment}

\let\en=\ensuremath


\DeclarePairedDelimiter{\norm}{\lVert}{\rVert}

\DeclareMathOperator{\E}{\mathbb{E}}          

\newcommand{\Lap}[1]{\en{\mathsf{Lap}(#1)}}   

\DeclareMathOperator*{\argmax}{arg\,max\,}
\DeclareMathOperator{\Range}{range}

\renewcommand{\vec}[1]{\en{\bm{\mathrm{#1}}}}
\newcommand{\mat}[1]{\en{{\bm{\mathrm{#1}}}}}
\newcommand{\grad}[0]{\en{\nabla}}

\newcommand{\R}[0]{\mathbb{R}}


\SetFuncSty{textsc}


\setcopyright{none}

\acmDOI{}

\acmISBN{}

\acmConference{}{}{}
\acmYear{}



\begin{document}
\title{Concentrated Differentially Private Gradient Descent with Adaptive per-Iteration Privacy
  Budget}

\author{Jaewoo Lee}
\affiliation{%
  \institution{University of Georgia}
  \city{Athens}
  \state{GA}
  \postcode{30602}
}
\email{jwlee@cs.uga.edu}

\author{Daniel Kifer}
\affiliation{%
  \institution{Penn State University}
  \city{University Park}
  \state{PA}
  \postcode{16802}
}
\email{dkifer@cse.psu.edu}

\renewcommand{\shortauthors}{J. Lee and D. Kifer}

\graphicspath{{./figures/}}

\begin{abstract}
  Iterative algorithms, like gradient descent, are common tools for solving a variety of problems, such as model fitting. For this reason, there is interest in creating differentially private versions of them. However, their conversion to differentially private algorithms is often naive. For instance, a fixed number of iterations are chosen, the privacy budget is split evenly among them, and at each iteration, parameters are updated with a noisy gradient.

In this paper, we show that gradient-based algorithms can be improved by a more careful allocation of privacy budget per iteration. Intuitively, at the beginning of the optimization, gradients are expected to be large, so that they do not need to be measured as accurately. However, as the parameters approach their optimal values, the gradients decrease and hence need to be measured more accurately. We add a basic line-search capability that helps the algorithm decide when more accurate gradient measurements are necessary.

Our gradient descent algorithm works with the recently introduced zCDP version of differential privacy. It outperforms prior algorithms for model fitting and is competitive with the state-of-the-art for $(\epsilon,\delta)$-differential privacy, a strictly weaker definition than zCDP.

%


\end{abstract}

%
%


\keywords{Differential privacy, ERM, Gradient descent}

\maketitle

\section{Introduction}
Iterative optimization algorithms are designed to find a parameter vector $\vec{w}^*\in\R^p$ that minimizes an objective function $f$. They start with an initial guess
 $\vec{w}_0$ and generate a 
sequence of iterates $\{\vec{w}_t\}_{t\geq 0}$ such that $\vec{w}_t$
tends to $\vec{w}^*$ as $t \to \infty$. At iteration $t$, information about the objective function $f(\vec{w}_t)$, such as the gradient $\grad f(\vec{w}_t)$, is computed and used to obtain the next iterate $\vec{w}_{t+1}$. In the case of gradient (or stochastic gradient) descent, updates have a form like:
\begin{equation}
  \vec{w}_{t+1} = \vec{w}_t - \alpha_t (\grad f(\vec{w}_t))\,,
  \label{eq:nonprivate_update}
\end{equation}
where $\alpha_t$ is a carefully chosen step size that often depends on the data (for example, through a line search \cite{Nocedal2006NO}) or based on previous gradients.

When designing a such algorithm under various versions of differential privacy, the update steps typically have the following form \cite{song13, Abadi2016deep}:
\begin{equation}
  \vec{w}_{t+1} = \vec{w}_t - \alpha_t (\grad f(\vec{w}_t) + Y_t)\,,
  \label{eq:gd_update}
\end{equation}
where $Y_t$ is an appropriately scaled noise variable (e.g., Laplace or Gaussian) for iteration $t$, and the gradient may be computed on some or all of the data. It is important to note that the noisy gradients $\grad f(\vec{w}_t) + Y_t$ might not be descent directions even when computed on the entire dataset. 

In prior work (e.g., \cite{Zhang2013privgene,Bassily2014PERM,Talwar2015nearly,Wang2015free}), the total number of iterations $T$ is fixed \emph{a priori}, and the desired privacy cost, say $\epsilon$, is split across the iterations: $\epsilon=\epsilon_1+\dots+\epsilon_T$. For any iteration $t$, the variance of $Y_t$ is a function of $1/\epsilon_t$ and depends on which version of differential privacy is being used (e.g., pure differential privacy \cite{Dwork2006calibrating}, approximate differential privacy \cite{Dwork2006our}, or zero-mean concentrated differential privacy \cite{Bun2016zCDP}). Furthermore, in prior work, the privacy budget is evenly split across iterations, so $\epsilon_1=\dots=\epsilon_T=\epsilon/T$.

There are two drawbacks to this approach. First, accuracy heavily depends on the pre-specified number of iterations $T$ --- if $T$ is too small, the algorithm will stop well short of the optimum; if $T$ is too large, the privacy budget $\epsilon_t$ for each iteration is small, so that large amounts of noise must be added to each gradient, thus swamping the signal provided by the gradient. Second, at the beginning of the optimization, gradients are expected to be large, so that an algorithm can find good parameter updates even when the gradient is not measured accurately. However, as the current parameters $\vec{w}_t$ approach the optimal values, the gradients start to decrease and need to be measured more accurately in order for the optimization to continue making progress (e.g., continue to minimize or approximately minimize $f$). This means that an adaptive privacy budget allocation is preferable to a fixed allocation (as long as the total privacy cost is the same).

In this paper, we propose an adaptive gradient descent strategy for zero-mean Concentrated Differential Privacy \cite{Bun2016zCDP} (zCDP) where each iteration has a different share $\epsilon_t$ of the overall privacy budget $\epsilon$. It uses a smaller share of the privacy budget (more noise) for gradients with large norm and a larger share (less noise) for gradients with small norm. Thus, if there are many steps with large gradients, the algorithm will be able to run for more iterations, while if there are many steps with small gradients, it will run for fewer iterations but make sure that each noisy gradient is accurate enough to help decrease the objective function (instead of performing a completely random walk over the parameter space). To the best of our knowledge, our work is the first to adaptively choose $\epsilon_t$ depending on the previous iterate and the utility of noisy statistic for the current iteration.

One of the challenges is to figure out whether the amount of noise added to a gradient is too much to be useful. This is far from trivial as the noisy gradient can be a descent direction even when the norm of the noise is much larger than the norm of the true gradient. For example, consider the following run of the noisy gradient descent  algorithm to train logistic regression on the UCI Adult dataset \cite{Lichman2013UCI} with Gaussian noise vectors added to the gradient, as shown in Table \ref{tab:graddesc}.
\begin{table}[t!]
\begin{tabular}{|l|l|l|l|l|}\hline
$t$ & $||\grad f(\vec{w}_t)||_2$ & $||\grad f(\vec{w}_t)+Y_t||_2$ & $\sqrt{E[||Y_t||_2^2]}$ & $f(\vec{w}_t)$\\\hline
 0 &    0.72558 &    0.91250 &    0.50119 &    0.69315 \\
 1 &    0.20550 &    0.52437 &    0.50119 &    0.53616 \\
 2 &    0.15891 &    0.54590 &    0.50119 &    0.46428 \\
 3 &    0.11864 &    0.49258 &    0.50119 &    0.43678 \\
 4 &    0.09715 &    0.50745 &    0.50119 &    0.41852 \\
 5 &    0.14050 &    0.52271 &    0.50119 &    0.41122 \\
 6 &    0.12380 &    0.48218 &    0.50119 &    0.38903 \\
 7 &    0.06237 &    0.53640 &    0.50119 &    0.38175 \\
 8 &    0.05717 &    0.47605 &    0.50119 &    0.37865 \\
 9 &    0.05625 &    0.55129 &    0.50119 &    0.37814 \\
10 &    0.05241 &    0.51636 &    0.50119 &    0.37542 \\\hline
\end{tabular}
\caption{Objective function value and true gradient vs. noisy gradient magnitude. \label{tab:graddesc}}
\end{table}
Note that Gaussian noise is one of the distributions that can achieve zero-Mean Concentrated Differential Privacy \cite{Bun2016zCDP}.
We see that the magnitude of the true gradient decreases from approximately $0.7$ to $0.05$ while the norm of the noisy gradient starts at $0.91$ and only decreases to approximately $0.516$ -- an order of magnitude larger than the corresponding true gradient. Yet, all this time the objective function keeps decreasing, which means that the noisy gradient was still a descent direction despite the noise.

Our solution is to use part of the privacy budget allocated to step $t$ to compute the noisy gradient $\tilde{S}_t=\grad f(\vec{w}_t) + Y_t$. We use the remaining part of the privacy budget allocated to step $t$ to select the best step size. That is, we start with  a predefined set of step sizes $\Phi$ (which includes a step size of 0). Then, we use the differentially private noisy min algorithm \cite{Dwork2014DPbook} to approximately find the $\alpha\in \Phi$ for which  $f(\vec{w}_t-\alpha \tilde{S}_t)$ is smallest (i.e. we find which step size causes the biggest decrease on the objective function).  If the selected step size $\alpha$ is not $0$, then we set $\vec{w}_{t+1}=\vec{w}_t - \alpha \tilde{S}_t$; thus our algorithm supports variable step sizes, which can help gradient descent algorithms converge faster. On the other hand, if the selected step size is $0$, it is likely that the noise was so large that the noisy gradient is not a descent direction and it triggers an increase in share of the privacy budget that is assigned to subsequent steps.

This brings up the second problem. If the chosen step size $\alpha$ is
$0$, it means two things: we should increase our current privacy
budget share from $\epsilon_t$ to some larger value
$\epsilon_{t+1}$. It also means we should not use the current noisy
gradient for a parameter update. However, the noisy gradient still
contains some information about the gradient. Thus, instead of
measuring the gradient again using a privacy budget of
$\epsilon_{t+1}$ and discarding our previous estimate, we
measure it again with a smaller budget $\epsilon_{t+1}-\epsilon_t$ and
merge the result with our previous noisy gradient.

Our contributions are summarized as follows:
\begin{itemize}[leftmargin=*]
\item We propose a gradient descent algorithm for a variation of differential privacy, called zCDP \cite{Bun2016zCDP}, that is weaker than $\epsilon$-differential privacy, but is stronger than  $(\epsilon,\delta)$-differential privacy.
\item To the best of our knowledge, this is the first private gradient-based algorithm in which the privacy budget and step size for each iteration is dynamically determined at runtime based  on the quality of the noisy statistics (e.g., gradient) obtained for the current
  iteration.
\item We perform extensive experiments on real datasets against other recently proposed empirical risk minimization algorithms. We empirically
  show the effectiveness of the proposed algorithm for a wide range of
  privacy levels. 
\end{itemize}

The rest of this paper is organized as follows. In
Section~\ref{sec:related_work}, we review related work. In
Section~\ref{sec:background}, we provide background on differential
privacy. Section~\ref{sec:gradavg} introduces our gradient averaging
technique. We present the approach for the dynamic adaptation of privacy
budget in Section~\ref{sec:algorithm}. Section~\ref{sec:experiments}
contains the experimental results on real datasets.


\section{Related Work}
\label{sec:related_work}

A typical strategy in statistical learning is the empirical risk
minimization (ERM), in which a model's averaged error on a dataset is
minimized.
There have been several
efforts~\cite
{Williams2010probabilistic,Chaudhuri2011Objpert,Rubinstein2012learning,Kifer2012erm,Jain2012online,Zhang2013privgene,Bassily2014PERM,Wang2015free,Talwar2015nearly,Wang2017dpsvrg}
to develop privacy-preserving algorithms for convex ERM
problems using variations of differential privacy. A number of approaches have been proposed in the literature. 
The simplest approach is to perturb the output of a non-private
algorithm with random noise drawn from some probability
distribution. This is called \emph{output
  perturbation}~\cite{Dwork2006calibrating,Chaudhuri2011Objpert,Zhang2017RR}. In
general, the resulting noisy outputs of learning algorithms are often
inaccurate because the noise is calibrated to the worst case
analysis. Recently, Zhang et al.~\cite{Zhang2017RR} used algorithmic stability arguments to bound the $L_2$  sensitivity of full batch gradient descent
algorithm to determine the amount of noise that must be added to outputs that partially optimizes the objective function. Although they achieve theoretical near optimality, this algorithm has not been empirically shown to be superior to methods such as \cite{Chaudhuri2011Objpert}.

One approach that has shown to be very effective is the \emph{objective
  perturbation} method due to Chaudhuri et
al.~\cite{Chaudhuri2011Objpert}. In objective
perturbation, the ERM objective function is perturbed by
adding a linear noise term to its objective function, and then the
problem is solved using a non-private optimization solver.
Kifer et al.~\cite{Kifer2012erm} improved the utility of the objective
perturbation method at the cost of using approximate instead of pure differential privacy.
While this
approach is very effective, its privacy guarantee is based on the
premise that the problem is solved \emph{exactly}. This is, however,
rarely the case in practice; most of time optimization problems are
solved approximately.

Another approach that has gained popularity is the iterative gradient
perturbation method~\cite{Williams2010probabilistic,Bassily2014PERM} and their
variants~\cite{Wang2017dpsvrg,Wang2015free,Zhang2017RR}. Bassily et
al.~\cite{Bassily2014PERM} proposed an $(\epsilon,
\delta)$-differentially private version of stochastic gradient descent
(SGD) algorithm. At each iteration, their algorithm perturbs the
gradient with Gaussian noise and applies the advanced
composition~\cite{Dwork2010boosting} together with privacy
amplification result~\cite{Beimel2014bounds} to get an upper bound on the total 
privacy loss. Further, they also have shown that their lower bounds on
expected the excess risk is optimal, ignoring multiplicative log factor for
both lipschitz convex and strongly convex functions. Later, Talwar et
al.~\cite{Talwar2015nearly} improved those lower bounds on the utility
for LASSO problem. In~\cite{Wang2017dpsvrg}, gradient perturbation
method has been combined with the stochastic variance reduced gradient (SVRG)
algorithm, and the resulting algorithm has shown to be near-optimal
with less gradient complexity.

Zhang et al.~\cite{Zhang2013privgene} presented a genetic algorithm for
differentially private model fitting, called PrivGene, which has a
different flavor from other gradient-based methods. Given the fixed
number of total iterations, at each iteration, PrivGene 
iteratively generates a set of candidates by emulating natural
evolutions and chooses the one that best fits the model using the
exponential mechanism~\cite{Dwork2014DPbook}.

All of the iterative algorithms discussed above use predetermined
privacy budget sequence.


\section{Background}
\label{sec:background}
In this section, we provide background on differential privacy and
introduce important theorems.

\subsection{Differential Privacy}
Let $D = \{d_1, d_2, \ldots, d_n\}$ be a set of $n$
 observations, each drawn from some domain $\mathcal{D}$.
A database $D' \in \mathcal{D}^n$ is called neighboring to $D$ if
$|(D\setminus D') \cup (D' \setminus D)| = 1$.
In other words, $D'$ is obtained by adding or removing one
observation from $D$. To denote this relationship, we write
$D{\sim}D'$. 
The formal definition of differential privacy (DP) is given in
Definition~\ref{def:dp}. 
\begin{definition}[($\epsilon,\delta$)-DP~\cite{Dwork2006calibrating,Dwork2006our}]
A randomized mechanism $\mathcal{M}$ satisfies ($\epsilon,
\delta$)-differential privacy if for every event $S \subseteq
\Range(\mathcal{M})$ and for all $D{\sim}D' \in \mathcal{D}^n$, 
\[
  \Pr[\mathcal{M}(D) \in S] \leq \exp(\epsilon)\Pr[\mathcal{M}(D')\in
  S] + \delta\,. 
\]
\label{def:dp}
\end{definition}
When $\delta=0$, $\mathcal{M}$ achieves \emph{pure} differential privacy
which provides stronger privacy protection than \emph{approximate}
differential privacy in which $\delta > 0$.

To satisfy
($\epsilon$, $\delta$)-DP (for $\delta>0$), we can use the Gaussian mechanism, which adds
Gaussian noise calibrated to the $L_2$ sensitivity of the query
function.
\begin{definition}[$L_1$ and $L_2$ sensitivity]
Let $q:\mathcal{D}^n \to \R^d$ be a query function. The $L_1$ (resp. $L_2$) sensitivity of
$q$, denoted by $\Delta_1(q)$ (resp., $\Delta_2(q)$) is defined as
\[
\Delta_1(q) = \max_{D{\sim}D'}\norm{q(D) - q(D')}_1\qquad \Delta_2(q) = \max_{D{\sim}D'} \norm{q(D) - q(D')}_2\,.
\]
\end{definition}
The $L_1$ and $L_2$ sensitivities represent the maximum change in the output
value of $q$ (over all possible neighboring databases in
$\mathcal{D}^n$) when one individual's data is changed.
\begin{theorem}[Gaussian mechanism~\cite{Dwork2014DPbook}]
Let $\epsilon \in (0, 1)$ be arbitrary and $q$ be a query function
with $L_2$ sensitivity of $\Delta_2(q)$. The Gaussian Mechanism, which returns $q(D)+N(0,\sigma^2)$, with
\begin{equation}
  \sigma \geq \frac{\Delta_2(q)}{\epsilon}\sqrt{2\ln(1.25/\delta)}
  \label{eq:gm}
\end{equation}
is ($\epsilon$, $\delta$)-differentially private.
\label{thm:gaussian_mech}
\end{theorem}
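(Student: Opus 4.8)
The plan is to control the \emph{privacy loss random variable} and then invoke the standard sufficient condition for approximate differential privacy: if, for every pair of neighboring databases $D \sim D'$, the privacy loss $L(x) = \ln\frac{p_D(x)}{p_{D'}(x)}$ (where $p_D,p_{D'}$ denote the output densities of $\mathcal{M}$ on $D$ and $D'$) satisfies $\Pr_{x\sim p_D}[L(x) > \epsilon] \le \delta$, then $\mathcal{M}$ is $(\epsilon,\delta)$-DP. This implication is itself short: for any event $S$, split it into its intersection with $\{L \le \epsilon\}$ and with $\{L > \epsilon\}$; on the first piece the bound $p_D(x) \le e^\epsilon p_{D'}(x)$ produces the $e^\epsilon$ factor against $\Pr[\mathcal{M}(D')\in S]$, while the second piece contributes at most $\delta$. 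So it suffices to establish the tail bound on $L$ for the worst-case pair.

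First I would write the output as $x = q(D) + Z$ with $Z \sim N(0,\sigma^2 I)$ and set $v = q(D) - q(D')$, so that $\norm{v}_2 \le \Delta_2(q)$. Expanding the ratio of the two spherical Gaussian densities, the quadratic exponents telescope and one obtains $L(x) = \tfrac{1}{\sigma^2}\langle Z, v\rangle + \tfrac{\norm{v}_2^2}{2\sigma^2}$. Since $\langle Z, v\rangle \sim N(0,\sigma^2\norm{v}_2^2)$, the privacy loss is itself Gaussian with mean $\norm{v}_2^2/(2\sigma^2)$ and variance $\norm{v}_2^2/\sigma^2$; in particular the analysis is effectively one-dimensional, depending on $v$ only through $\norm{v}_2$. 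Because the tail probability is increasing in $\norm{v}_2$, I would replace $\norm{v}_2$ by its worst-case value $\Delta_2(q)$ from this point on.

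Next I would reduce $\Pr[L > \epsilon] \le \delta$ to a standard-normal tail. Rearranging $L > \epsilon$ into a threshold on $\langle Z, v\rangle$ and dividing by the standard deviation $\sigma\Delta_2(q)$ turns the requirement into $\Pr[N(0,1) > t] \le \delta$ with $t = \tfrac{\sigma\epsilon}{\Delta_2(q)} - \tfrac{\Delta_2(q)}{2\sigma}$. I would then apply the subgaussian tail bound $\Pr[N(0,1) > t] \le \tfrac{1}{t\sqrt{2\pi}}\,e^{-t^2/2}$, substitute the prescribed $\sigma = \tfrac{\Delta_2(q)}{\epsilon}\sqrt{2\ln(1.25/\delta)}$, and verify that the resulting expression is at most $\delta$.

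I expect this final verification to be the main obstacle: it is the delicate algebra that pins down the constant $1.25$ and forces the hypothesis $\epsilon \in (0,1)$. After substitution, $t$ becomes an explicit function of $\ln(1.25/\delta)$, and bounding $\tfrac{1}{t\sqrt{2\pi}}\,e^{-t^2/2} \le \delta$ requires carefully controlling both the cross term $-\tfrac{\Delta_2(q)}{2\sigma}$ inside $t$ and the $1/(t\sqrt{2\pi})$ prefactor, so that the leading $e^{-t^2/2}$ decay dominates the lower-order contributions; the restriction $\epsilon < 1$ is precisely what makes these estimates go through. Everything preceding this point is mechanical, so I would budget most of the effort for confirming this one inequality.
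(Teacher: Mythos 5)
The paper offers no proof of this theorem at all---it is stated as background and cited directly from Dwork and Roth \cite{Dwork2014DPbook}---so the only meaningful comparison is with that source, and your outline is exactly its proof: derive the Gaussian privacy-loss random variable $L=\tfrac{1}{\sigma^2}\langle Z,v\rangle+\tfrac{\norm{v}_2^2}{2\sigma^2}$, use the standard lemma that $\Pr[L>\epsilon]\le\delta$ implies $(\epsilon,\delta)$-DP, reduce to a standard-normal tail at threshold $t=\tfrac{\sigma\epsilon}{\Delta_2(q)}-\tfrac{\Delta_2(q)}{2\sigma}$, and check the tail is below $\delta$ for the prescribed $\sigma$. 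Your reductions are all correct and you correctly identify where the real work lies; the only caveat is that the decisive step---the algebraic verification that pins down the constant $1.25$ and actually uses the hypothesis $\epsilon<1$---is deferred rather than carried out, so what you have is a correct and standard plan rather than a complete proof.
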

An important property of differential privacy is that its privacy
guarantee degrades gracefully under the composition. The most basic 
composition result shows that the privacy loss grows \emph{linearly}
under $k$-fold composition~\cite{Dwork2014DPbook}.
This means that, if we sequentially apply an $(\epsilon, \delta)$-DP algorithm
$k$ times on the same data, the resulting process is $(k\epsilon,
k\delta)$-differentially private. Dwork et
al.~\cite{Dwork2010boosting} introduced an advanced composition, where
the loss increases sublinearly (i.e., at the rate of $\sqrt{k}$).
\begin{theorem}[Advanced composition~\cite{Dwork2010boosting}]
For all $\epsilon$, $\delta$, $\delta'\geq 0$, the class of $(\epsilon,
\delta)$-differentially private mechanisms satisfies $(\epsilon',
k\delta + \delta')$-differential privacy under $k$-fold adaptive
composition for $\epsilon'=\sqrt{2k\ln(1/\delta')}\epsilon +
k\epsilon(e^\epsilon-1)$. 
\label{thm:advcomp}
\end{theorem}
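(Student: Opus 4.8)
The plan is to analyze the \emph{privacy-loss random variable} of the composed mechanism. Fix neighboring databases $D \sim D'$ and write the $k$-fold adaptive composition as $\mathcal{M} = (\mathcal{M}_1, \dots, \mathcal{M}_k)$, where each $\mathcal{M}_i$ may depend on the earlier outputs $o_{<i} = (o_1,\dots,o_{i-1})$. Drawing the $o_i$ from the execution on $D$, I would define the per-step loss
\[
L_i = \ln \frac{\Pr[\mathcal{M}_i(D)=o_i \mid o_{<i}]}{\Pr[\mathcal{M}_i(D')=o_i \mid o_{<i}]},
\]
and set $L = \sum_{i=1}^k L_i$. The reduction I would rely on is the standard sufficient condition for approximate privacy: if, for every pair of neighbors, $\Pr_{o \sim \mathcal{M}(D)}[L > \epsilon'] \le \beta$, then $\mathcal{M}$ is $(\epsilon', \beta)$-DP. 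Hence it suffices to show that $L$ exceeds $\epsilon' = \sqrt{2k\ln(1/\delta')}\,\epsilon + k\epsilon(e^\epsilon-1)$ with probability at most $k\delta + \delta'$.

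First I would absorb the approximate ($\delta>0$) slack. Using that each $\mathcal{M}_i$ is $(\epsilon,\delta)$-DP, I would split each step into a ``good'' event on which the per-step loss is genuinely two-sided bounded, $|L_i| \le \epsilon$, and a ``bad'' event of probability at most $\delta$; a union bound over the $k$ steps charges $k\delta$ to the overall failure probability, and on its complement every $L_i$ lies in $[-\epsilon,\epsilon]$. Then, conditioned on the good event and on the past, a direct calculation using the two-sided ratio bound $e^{-\epsilon} \le \Pr[\mathcal{M}_i(D)=o_i\mid o_{<i}]/\Pr[\mathcal{M}_i(D')=o_i\mid o_{<i}] \le e^{\epsilon}$ bounds the conditional expectation by $\E[L_i \mid o_{<i}] \le \epsilon(e^\epsilon-1)$ (the per-step KL bound for $\epsilon$-DP). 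Summing, $\sum_i \E[L_i \mid o_{<i}] \le k\epsilon(e^\epsilon-1)$, which is exactly the deterministic part of $\epsilon'$.

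The remaining fluctuation I would control with martingale concentration. The increments $L_i - \E[L_i\mid o_{<i}]$ form a martingale difference sequence — conditioning on the past, rather than assuming independence, is precisely what makes the adaptive choice of $\mathcal{M}_{i+1}$ from $o_{<i}$ admissible — and on the good event each is supported on an interval of width $2\epsilon$. Azuma--Hoeffding then yields
\[
\Pr\!\left[\,\sum_i L_i - \sum_i \E[L_i\mid o_{<i}] \ge t\,\right] \le \exp\!\left(-\frac{t^2}{2k\epsilon^2}\right),
\]
and taking $t = \sqrt{2k\ln(1/\delta')}\,\epsilon$ makes the right-hand side equal to $\delta'$. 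Combining the two failure modes, $L$ exceeds $k\epsilon(e^\epsilon-1) + t = \epsilon'$ with probability at most $k\delta + \delta'$, and the sufficient condition from the first paragraph delivers $(\epsilon', k\delta+\delta')$-DP.

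The main obstacle I expect is the interface between the approximate guarantee and the clean bounded-loss setting used for both the expectation bound and Azuma. A mechanism is not literally $\epsilon$-DP conditioned on any single event, so the good/bad decomposition must be made precise through a coupling (or the standard ``deletion'' argument that rewrites each $(\epsilon,\delta)$-DP output distribution as a $\delta$-perturbation of a two-sided-bounded one), and one must verify that the martingale structure survives this rewriting under adaptive composition. Once the decomposition is justified, the KL inequality and the Azuma constants are routine, and the only delicate bookkeeping is ensuring the union-bound term is $k\delta$ rather than a looser quantity.
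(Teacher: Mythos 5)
The paper does not prove this statement at all: it is imported as background (Theorem~\ref{thm:advcomp}) with a citation to Dwork et al., so there is no in-paper proof to compare against. Your argument is, in substance, the standard proof from that cited source: define the per-step privacy-loss variables, reduce $(\epsilon',\beta)$-DP to a tail bound on the total loss, bound each conditional expectation by the KL-type inequality $\E[L_i \mid o_{<i}] \le \epsilon(e^\epsilon-1)$, and control the fluctuations with Azuma--Hoeffding. Your constants check out: on the good event each increment $L_i - \E[L_i\mid o_{<i}]$ is supported on an interval of length $2\epsilon$, so the range form of Azuma--Hoeffding gives $\exp\left(-2t^2/(k(2\epsilon)^2)\right) = \exp\left(-t^2/(2k\epsilon^2)\right)$, and $t = \sqrt{2k\ln(1/\delta')}\,\epsilon$ yields exactly $\delta'$; note that if you instead used the absolute-bound form $\abs{D_i}\le 2\epsilon$ you would lose a factor of $2$ in $t$, so this distinction matters. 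The one genuine gap you flag yourself is real but standard: a mechanism is not $\epsilon$-DP conditioned on a ``good'' event, so the $k\delta$ term must come from the decomposition lemma stating that any pair of $(\epsilon,\delta)$-indistinguishable distributions is within total variation $\delta$ of a pair that is pointwise $\epsilon$-indistinguishable, applied per step conditioned on each prefix $o_{<i}$, with the martingale analysis run on the modified mechanism and the $k\delta$ charged at the end via the coupling. With that lemma inserted, your proof is complete and is the same route as the cited reference.
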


\subsection{Concentrated Differential Privacy}
Bun and Steinke~\cite{Bun2016zCDP} recently introduced a relaxed 
version of differential privacy, called zero-concentrated differential
privacy (zCDP). To define $\rho$-zCDP, we first introduce the privacy
loss random variable. For an output $o \in \Range(\mathcal{M})$, the
privacy loss random variable $Z$ of the mechanism $\mathcal{M}$ is defined as
\[
  Z = \log\frac{\Pr[\mathcal{M}(D) = o]}{\Pr[\mathcal{M}(D')=o]}\,.
\]
$\rho$-zCDP imposes a bound on the \emph{moment generating function} of the
privacy loss $Z$ and requires it to be concentrated around
zero. Formally, it needs to satisfy
\[
  e^{D_{\alpha}(\mathcal{M}(D)||\mathcal{M}(D'))}=\E\left[e^{(\alpha-1)Z}\right] \leq
  e^{(\alpha-1)\alpha\rho}\,,\; \forall \alpha \in (1, \infty)\,, 
\]
where $D_{\alpha}(\mathcal{M}(D)||\mathcal{M}(D'))$ is the
$\alpha$-R\'enyi divergence.
In this paper, we use the following zCDP composition results.
\begin{lemma}[\cite{Bun2016zCDP}]
  Suppose two mechanisms satisfy $\rho_1$-zCDP and $\rho_2$-zCDP, then
  their composition satisfies ($\rho_1+\rho_2$)-zCDP.
  \label{lem:comp}
\end{lemma}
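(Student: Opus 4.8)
\noindent The plan is to reduce $\rho$-zCDP to a uniform bound on the R\'enyi divergence and then exploit the way that divergence factorizes under composition. The first step is to note that the definition in the excerpt is equivalent to the statement that, for every pair of neighbors $D\sim D'$ and every order $\alpha\in(1,\infty)$,
\[
  \E_{o\sim\mathcal{M}(D)}\!\left[e^{(\alpha-1)Z}\right]
  = \int P(o)^{\alpha}\,Q(o)^{1-\alpha}\,do
  \le e^{(\alpha-1)\alpha\rho},
\]
where $P$ and $Q$ denote the output laws of $\mathcal{M}$ on $D$ and $D'$. Hence to prove the lemma it suffices to show that the composed mechanism $\mathcal{M}=(\mathcal{M}_1,\mathcal{M}_2)$ satisfies $\int P(o)^{\alpha}Q(o)^{1-\alpha}\,do \le e^{(\alpha-1)\alpha(\rho_1+\rho_2)}$ for every $\alpha>1$, since $\int P(o)^{\alpha}Q(o)^{1-\alpha}\,do = e^{(\alpha-1)D_\alpha(P\|Q)}$ and dividing the exponent by $(\alpha-1)>0$ then recovers $D_\alpha\le\alpha(\rho_1+\rho_2)$, i.e.\ $(\rho_1+\rho_2)$-zCDP.

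The core step is a factorization. Writing an output of the composition as a pair $o=(o_1,o_2)$ and letting $P_1,Q_1$ be the marginals of the first stage and $P_2(\cdot\mid o_1),Q_2(\cdot\mid o_1)$ the (possibly adaptive) conditionals of the second stage, I would factor
\[
  \int P(o)^{\alpha}Q(o)^{1-\alpha}\,do
  = \int P_1(o_1)^{\alpha}Q_1(o_1)^{1-\alpha}
    \left(\int P_2(o_2\mid o_1)^{\alpha}Q_2(o_2\mid o_1)^{1-\alpha}\,do_2\right)do_1.
\]
By the $\rho_2$-zCDP guarantee of $\mathcal{M}_2$, the inner integral is at most $e^{(\alpha-1)\alpha\rho_2}$ for \emph{every} fixed $o_1$; pulling this uniform bound out and applying the $\rho_1$-zCDP guarantee of $\mathcal{M}_1$ to the remaining first-stage integral gives
\[
  \int P(o)^{\alpha}Q(o)^{1-\alpha}\,do
  \le e^{(\alpha-1)\alpha\rho_2}\int P_1(o_1)^{\alpha}Q_1(o_1)^{1-\alpha}\,do_1
  \le e^{(\alpha-1)\alpha(\rho_1+\rho_2)},
\]
which is exactly the bound required.

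I expect the main obstacle to be the adaptive nature of the composition rather than the algebra: because the output distribution of $\mathcal{M}_2$ may depend on the realized output $o_1$ of $\mathcal{M}_1$, the $\rho_2$ bound must be invoked conditionally and must hold \emph{uniformly} over all $o_1$ (and over the neighboring pair $D\sim D'$) so that the factor $e^{(\alpha-1)\alpha\rho_2}$ can legitimately be extracted from the $o_1$-integral. A secondary, purely technical point is justifying the factorization and the interchange of integration for continuous output spaces, which follows by writing $P$ and $Q$ as products of densities and invoking Fubini's theorem, the integrands being nonnegative.
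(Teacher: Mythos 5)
Your proof is correct. The paper gives no proof of its own for this statement --- the lemma is imported directly from Bun and Steinke \cite{Bun2016zCDP} --- and your argument (recasting $\rho$-zCDP as the moment bound $\int P(o)^{\alpha}Q(o)^{1-\alpha}\,do \le e^{(\alpha-1)\alpha\rho}$ for all $\alpha>1$, factoring the composed mechanism's integral through the conditional law of the second stage, bounding the inner integral uniformly in $o_1$, then applying the first-stage bound) is essentially the canonical proof given in that reference, and you correctly flag the only delicate points: the $\rho_2$ guarantee must hold conditionally and uniformly over $o_1$ to handle adaptivity, and the factorization/interchange of integrals is justified by nonnegativity via Tonelli's theorem.
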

\begin{lemma}[\cite{Bun2016zCDP}]
  The Gaussian mechanism, which returns $q(D)+N(0,\sigma^2)$  satisfies $\Delta_2(q)^2/(2\sigma^2)$-zCDP.
  \label{lem:gauss2zcdp}
\end{lemma}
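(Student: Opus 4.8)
The plan is to compute the $\alpha$-R\'enyi divergence between the two output distributions directly and verify that it matches the zCDP bound. Fix neighboring databases $D{\sim}D'$ and write $P = N(q(D), \sigma^2 I)$ and $Q = N(q(D'), \sigma^2 I)$ for the corresponding output distributions of the Gaussian mechanism. Set $\Delta = \norm{q(D)-q(D')}_2$, so that $\Delta \le \Delta_2(q)$ by definition of the $L_2$ sensitivity. Since the zCDP condition is stated as $\E[e^{(\alpha-1)Z}] \le e^{(\alpha-1)\alpha\rho}$ for every $\alpha \in (1,\infty)$, it suffices to evaluate this moment generating function exactly and check that the exponent equals $(\alpha-1)\alpha\,\Delta^2/(2\sigma^2)$, whence the claimed $\rho = \Delta_2(q)^2/(2\sigma^2)$ follows after bounding $\Delta \le \Delta_2(q)$.

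First I would reduce to a scalar computation. Because $N(0,\sigma^2 I)$ has independent coordinates, the density ratio $P(o)/Q(o)$ factorizes across coordinates, so the privacy loss variable $Z = \log(P(o)/Q(o))$ is a sum of per-coordinate contributions in which the squared displacement $\sum_i (q(D)_i - q(D')_i)^2 = \Delta^2$ is the only data-dependent quantity. I can therefore carry out the algebra as if $q$ were real-valued, replacing the scalar gap by $\Delta$ at the end.

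The key step is to recognize that, under $o \sim P$, the privacy loss $Z$ is \emph{itself} Gaussian. Writing $o = q(D) + \sigma W$ with $W \sim N(0,I)$ and expanding the difference of the two squared-norm exponents, the quadratic terms in $o$ cancel and $Z$ reduces to an affine function of $W$. A short completion-of-the-square computation gives $Z \sim N\!\left(\Delta^2/(2\sigma^2),\, \Delta^2/\sigma^2\right)$; the crucial feature is that the variance is exactly twice the mean. Substituting $t = \alpha-1$ into the standard Gaussian MGF $\E[e^{tZ}] = \exp(t\,m + t^2 s^2/2)$ with $m = \Delta^2/(2\sigma^2)$ and $s^2 = \Delta^2/\sigma^2$ collapses the exponent to $\frac{\Delta^2}{2\sigma^2}\big[(\alpha-1) + (\alpha-1)^2\big] = (\alpha-1)\alpha\,\frac{\Delta^2}{2\sigma^2}$, which is precisely the required form.

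I expect the only real obstacle to be the bookkeeping in the third step: confirming that the $o^2$ terms cancel so that $Z$ is genuinely Gaussian, and that its mean and variance satisfy the $m = s^2/2$ relationship that forces the exponent to factor as $(\alpha-1)\alpha$. Everything else---the multivariate-to-scalar reduction via independence and the final bound $\Delta \le \Delta_2(q)$---is routine. Note also that the computation yields equality in the moment bound for each $\alpha$, so the stated $\rho$ is tight for the worst-case pair of neighbors attaining $\Delta = \Delta_2(q)$.
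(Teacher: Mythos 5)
Your proof is correct: the privacy loss between $N(q(D),\sigma^2 I)$ and $N(q(D'),\sigma^2 I)$ is indeed Gaussian with mean $\Delta^2/(2\sigma^2)$ and variance $\Delta^2/\sigma^2$, and the MGF computation then yields exactly the exponent $(\alpha-1)\alpha\,\Delta^2/(2\sigma^2)$, with the final bound $\Delta\le\Delta_2(q)$ giving the claim. The paper itself does not prove this lemma---it imports it by citation from Bun and Steinke---and your argument is essentially the standard one given in that reference (computing the R\'enyi divergence between shifted Gaussians of equal variance), so there is nothing to flag.
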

\begin{lemma}[\cite{Bun2016zCDP}]
  If $\mathcal{M}$ satisfies $\epsilon$-differential privacy, them
  $\mathcal{M}$ satisfies $(\frac{1}{2}\epsilon^2)$-zCDP.
  \label{lem:eps2zcdp}
\end{lemma}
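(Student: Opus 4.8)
The plan is to work directly from the definition of $\rho$-zCDP given above, showing that pure $\epsilon$-differential privacy forces the moment generating function of the privacy loss to obey the required bound with $\rho = \epsilon^2/2$. The starting observation is that pure DP pins down the privacy loss pointwise: applying Definition~\ref{def:dp} with $\delta = 0$ to the singleton event $S = \{o\}$ yields $e^{-\epsilon} \le \Pr[\mathcal{M}(D)=o]/\Pr[\mathcal{M}(D')=o] \le e^{\epsilon}$ for every output $o$ and every pair of neighbors, so the privacy loss random variable satisfies $|Z| \le \epsilon$ almost surely. Fixing neighbors $D \sim D'$, it then suffices to prove $\log \E_{o\sim\mathcal{M}(D)}\!\big[e^{(\alpha-1)Z}\big] \le (\alpha-1)\alpha\,\epsilon^2/2$ for all $\alpha \in (1,\infty)$, since by the symmetry of the neighbor relation the same bound holds with $D$ and $D'$ interchanged.

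The two ingredients I would combine are a normalization identity and a concentration bound for bounded variables. First, by unfolding the definition of $Z$, $\E_{o\sim\mathcal{M}(D)}[e^{-Z}] = \sum_o \Pr[\mathcal{M}(D')=o] = 1$; this is the leverage that turns boundedness of $Z$ into a \emph{quadratic} rather than linear bound. Second, because $Z$ is supported on an interval of width at most $2\epsilon$, Hoeffding's lemma bounds its centered MGF: for every real $s$, $\E[e^{s(Z-\mu)}] \le e^{s^2\epsilon^2/2}$, where $\mu = \E[Z]$. Evaluating this at $s=-1$ and using the normalization identity gives $e^{\mu} = e^{\mu}\,\E[e^{-Z}] = \E[e^{-(Z-\mu)}] \le e^{\epsilon^2/2}$, hence the crucial mean bound $\mu \le \epsilon^2/2$.

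To finish, I would apply Hoeffding's lemma once more at $s = \alpha-1 > 0$ to get $\E[e^{(\alpha-1)Z}] \le e^{(\alpha-1)\mu + (\alpha-1)^2\epsilon^2/2}$, and then substitute $\mu \le \epsilon^2/2$. Factoring, the exponent becomes $(\alpha-1)\tfrac{\epsilon^2}{2}\big(1+(\alpha-1)\big) = (\alpha-1)\alpha\,\epsilon^2/2$, which is exactly $(\alpha-1)\alpha\rho$ with $\rho=\epsilon^2/2$; taking logarithms recovers the zCDP condition. The main obstacle is conceptual rather than computational: the naive bound $|\mu|\le\epsilon$ from boundedness alone only yields an $O(\epsilon)$ R\'enyi bound, which is too weak, so the real work is recognizing that the normalization $\E[e^{-Z}]=1$ must be exploited to pull the mean down to $O(\epsilon^2)$. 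The remaining delicacy is ensuring Hoeffding's lemma is invoked with the correct range ($Z\in[-\epsilon,\epsilon]$, width $2\epsilon$, giving the constant $\epsilon^2/2$) and confirming the argument is valid for continuous output spaces by replacing probabilities with densities.
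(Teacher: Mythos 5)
The paper does not actually prove this lemma; it is stated verbatim as a citation to Bun and Steinke \cite{Bun2016zCDP}, so there is no in-paper argument to compare against. Your proof is correct and is essentially the canonical argument behind that cited result (going back to Dwork and Rothblum's concentrated-DP work): pure $\epsilon$-DP confines the privacy loss $Z$ to $[-\epsilon,\epsilon]$, the normalization $\E[e^{-Z}]=1$ combined with Hoeffding's lemma at $s=-1$ forces $\E[Z]\le\epsilon^2/2$, and a second application of Hoeffding's lemma at $s=\alpha-1>0$ yields $\E[e^{(\alpha-1)Z}]\le e^{(\alpha-1)\alpha\epsilon^2/2}$ for all $\alpha\in(1,\infty)$, which is exactly the $(\tfrac{1}{2}\epsilon^2)$-zCDP condition; your handling of the symmetry of the neighbor relation and of continuous output spaces is also sound.
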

\begin{lemma}[\cite{Bun2016zCDP}]
  If $\mathcal{M}$ is a mechanism that provides $\rho$-zCDP, then
  $\mathcal{M}$ is 
  ($\rho+2\sqrt{\rho\log(1/\delta)}, \delta$)-DP for any $\delta>0$.
  \label{lem:zcdp2dp}
\end{lemma}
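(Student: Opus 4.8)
The plan is to convert the R\'enyi-divergence (equivalently, moment generating function) bound supplied by $\rho$-zCDP into a tail bound on the privacy loss random variable $Z$, and then to turn that tail bound into an $(\epsilon,\delta)$-DP guarantee. Fix neighboring databases $D \sim D'$ and let $Z$ be the privacy loss of $\mathcal{M}$ with the output distributed according to $\mathcal{M}(D)$. By the definition of $\rho$-zCDP recalled above, for every $\alpha \in (1,\infty)$ we have $\E[e^{(\alpha-1)Z}] \le e^{(\alpha-1)\alpha\rho}$, and the whole argument will leverage that this holds simultaneously for all such $\alpha$.

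First I would apply a Chernoff-style bound. For any target $\epsilon$ and any $\alpha > 1$, Markov's inequality applied to the nonnegative random variable $e^{(\alpha-1)Z}$ gives
\[
  \Pr[Z > \epsilon] = \Pr\!\left[e^{(\alpha-1)Z} > e^{(\alpha-1)\epsilon}\right] \le \frac{\E[e^{(\alpha-1)Z}]}{e^{(\alpha-1)\epsilon}} \le e^{(\alpha-1)(\alpha\rho - \epsilon)}.
\]
Since this holds for every $\alpha > 1$, I would then minimize the exponent over the free parameter. Writing $t = \alpha - 1 > 0$, the exponent becomes the quadratic $\rho t^2 + (\rho - \epsilon)t$, which, assuming $\epsilon > \rho$, is minimized at $t = (\epsilon - \rho)/(2\rho)$ and yields $\Pr[Z > \epsilon] \le e^{-(\epsilon - \rho)^2/(4\rho)}$. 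Setting the right-hand side equal to $\delta$ and solving for $\epsilon$ gives exactly $\epsilon = \rho + 2\sqrt{\rho \log(1/\delta)}$, so with this choice of $\epsilon$ we obtain $\Pr[Z > \epsilon] \le \delta$.

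Finally I would convert this tail bound into the definitional statement of $(\epsilon,\delta)$-DP. Letting $p$ and $q$ denote the densities of $\mathcal{M}(D)$ and $\mathcal{M}(D')$, for any measurable $S$ I would split $S$ according to whether $Z \le \epsilon$ or $Z > \epsilon$. On $\{Z \le \epsilon\}$ the definition of $Z$ gives $p(o) \le e^{\epsilon} q(o)$, so that portion of $\Pr[\mathcal{M}(D) \in S]$ is at most $e^{\epsilon}\Pr[\mathcal{M}(D') \in S]$; the remaining portion is at most $\Pr[Z > \epsilon] \le \delta$. Adding the two pieces yields $\Pr[\mathcal{M}(D)\in S] \le e^{\epsilon}\Pr[\mathcal{M}(D')\in S] + \delta$, and since neighboring is symmetric and zCDP quantifies over all $D \sim D'$, the same bound holds with the roles of $D$ and $D'$ exchanged. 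The main obstacle is precisely this last step: the Chernoff computation only controls the privacy loss from one side, so turning it into a two-sided statement that holds for \emph{every} event $S$ requires the density-splitting argument above (and a little care over the measure-zero region where $q(o)=0$), rather than following formally from the moment bound alone.
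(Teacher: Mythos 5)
Your proof is correct: the Chernoff/Markov bound on $\E[e^{(\alpha-1)Z}]$, the optimization at $\alpha-1=(\epsilon-\rho)/(2\rho)$ giving $\Pr[Z>\epsilon]\le e^{-(\epsilon-\rho)^2/(4\rho)}$, and the event-splitting conversion of the tail bound into $(\epsilon,\delta)$-DP are all sound. Note that the paper itself offers no proof of this lemma --- it is imported by citation from Bun and Steinke \cite{Bun2016zCDP} --- and your argument is essentially the standard one given in that reference, so there is no divergence to report.
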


\subsection{NoisyMax}
Let $\Psi = \{\vec{w}_1, \ldots, \vec{w}_s\}$ be a set of points in $\R^p$ and
$f:\R^p \to \R$ be a function that implicitly depends on a database
$D$. Suppose we want to choose a point
$\vec{w}_i\in \Psi$ with maximum $f(\vec{w}_i;D)$. There exists an
$(\epsilon, 0)$-DP algorithm, called 
NoisyMax~\cite{Dwork2014DPbook}. It adds independent 
noise drawn from $\Lap{\Delta_1(f)/\epsilon}$ to each $f(\vec{w}_i)$, for
$i\in [s]$, and
returns the index $i$ of the largest value, i.e.,
\[
  i = \argmax_{j \in [s]} \,\{f(\vec{w}_{j}) + \Lap{\Delta_f / \epsilon}\}\,,
\]
where $\Lap{\lambda}$ denotes
a Laplace distribution with mean 0 and scale parameter $\lambda$, and
the notation $[s]$ is used to denote the set $\{1, 2, \ldots, s\}$.
Note that, when $f$ is \emph{monotonic} in $D$
(i.e., adding a tuple to $D$ cannot decrease the value of $f$), noise
can be drawn from the exponential distribution with parameter
$\epsilon/\Delta_1(f)$, which yields better utility. The NoisyMin algorithm is obtained by applying NoisyMax to $-f$.

NoisyMax was originally intended to work with pure $\epsilon$-differential privacy. To get it to work with $\rho$-zCDP, we use the conversion result in Lemma \ref{lem:eps2zcdp}: an $\epsilon$-differentially private algorithm satisfies $\frac{\epsilon^2}{2}-zCDP$. Therefore, when using zCDP, if we wish to allocate $\rho^\prime$ of our zCDP privacy budget to NoisyMax, we call NoisyMax with $\epsilon=\sqrt{2\rho^\prime}$.

\begin{algorithm}[tp]
  \DontPrintSemicolon
  \KwIn{$\Omega$: a set of candidates, $\Delta_1(f)$: sensitivity of $f$,
    $\epsilon$: privacy budget for pure differential privacy}
  $\widetilde{\Omega} = \{\tilde{v}_i = v + \Lap{\Delta_1(f)/\epsilon}: v \in
  \Omega, i \in [|\Omega|]\}$\;
  \Return $\argmax_{j \in [|\Omega|]} \,\tilde{v}_j$\;
  \caption{\scshape NoisyMax($\Omega, \Delta_1(f), \epsilon$)}  
\end{algorithm}

\section{Gradient Averaging for zCDP}
\label{sec:gradavg}
One of the components of our algorithm is recycling estimates of gradients that weren't useful for updating parameters. In this section, we explain how this is done.
Suppose at iteration $t$, we are allowed to use $\rho^\prime_t$ of the zCDP privacy budget for estimating a noisy gradient. If $\Delta_2(\grad f)$ is the $L_2$ sensitivity of the gradient of $f$ then, under zCDP we can measure the noisy gradient as $S_t=\grad f(\vec{w}_t) + N(\vec{0}, \frac{\Delta_2(\grad f)^2}{2\rho_t})$.

If our algorithm decides that this is not accurate enough, it will trigger a larger share of privacy budget $\rho_{t+1}>\rho_t$ to be applied at the next iteration. However, instead of discarding $S_t$, we perform another independent measurement using $\rho_{t+1}-\rho_t$ privacy budget: $S_t^\prime = \grad f(\vec{w}_t) + N(\vec{0}, \frac{\Delta_2(\grad f)^2}{2(\rho_{t+1}-\rho_t)})$.

We combine $S_t$ and $S^\prime_t$ in the following way:
$$\hat{S}_t = \frac{\rho_t S_t + (\rho_{t+1}-\rho_t)S^\prime_t}{\rho_t + (\rho_{t+1}-\rho_t)}$$
Simple calculations show that
\begin{align*}
E[\hat{S}_t]&=\grad f(\vec{w}_t)\\
Var(\hat{S}_t) &= \left(\rho_t^2\frac{\Delta_2(\grad f)^2}{2\rho_t} +  \frac{\Delta_2(\grad f)^2}{2(\rho_{t+1}-\rho_t)}(\rho_{t+1}-\rho_t)^2\right)/\rho_{t+1}^2\\
&= \frac{\Delta_2(\grad f)^2}{2\rho_{t+1}}
\end{align*}

Notice that computing $S_t$, then computing $S^\prime_t$ and obtaining the final estimate of the noisy gradient $\hat{S}_t$ uses a total privacy budget cost of $\rho_{t+1}$ and produces an answer with variance $\frac{\Delta_2(\grad f)^2}{2\rho_{t+1}}$. On the other hand, if we had magically known in advance that using a privacy budget share $\rho_t$ would lead to a bad gradient and pre-emptively used $\rho_{t+1}$ (instead of $\rho_t$) to measure the gradient, the privacy cost would be $\rho_{t+1}$ and the variance would still be $\frac{\Delta_2(\grad f)^2}{2\rho_{t+1}}$.

\section{Algorithm}
\label{sec:algorithm}

In this section, we provide a general framework for private ERM that
automatically adapts per-iteration privacy budget to make each
iteration progress toward an optimal solution. Let $D = \{d_1,
\ldots, d_n\}$ be an input database of $n$ independent
observations. Each observation $d_i = (\vec{x}_i, y_i)$ consists of
$\vec{x}_i \in \R^p$ and $y \in \R$. We consider empirical risk
minimization problem of the following form:
\begin{equation}
  \underset{\vec{w} \in \mathcal{C}}{\text{minimize}}\;
  f(\vec{w}; D) := \frac{1}{n}\sum_{i=1}^n \ell(\vec{w}; d_i)\,,
  \label{eq:erm}
\end{equation}
where $\ell$ is a loss function and $\mathcal{C}$ is a convex
set. Optionally, one may add a regularization term (e.g., 
$\frac{\lambda}{2}\norm{\vec{w}}_2^2$) 
into~\eqref{eq:erm} with no change in the privacy guarantee. Note that
the regularization term has no privacy implication as it is
independent of data. 

Algorithm~\ref{alg:dpagd} shows each step of the proposed
differentially private adaptive gradient descent algorithm
(DP-AGD). The algorithm has three main components: private
gradient approximation, step size selection, and adaptive noise
reduction. 
\begin{algorithm}[tp]
  \DontPrintSemicolon
  \SetNoFillComment
  \SetKwFunction{NDown}{NoiseDown}
   \SetKwFunction{Gavg}{GradAvg}
  \SetKwProg{Fn}{Function}{:}{end}
  \KwIn{privacy budget $\rho_{\mathsf{nmax}}, \rho_{\mathsf{ng}},
    \epsilon_{\mathsf{tot}}, \delta_{\mathsf{tot}}$,
    budget increase rate $\gamma$,
    clipping thresholds $C_{\mathsf{obj}}, C_{\mathsf{grad}}$,
    data $\{d_1,\dots,d_n\}$,
    objective function $f(\vec{w})=\sum_{i=1}^n \ell(\vec{w};d_i)$}
  Initialize $\vec{w}_0$ and $\Phi$\;
  $t\gets 0$, $\rho\gets$ solve~\eqref{eq:rho_conv} for $\rho$ \tcp{To compare to $(\epsilon,\delta)$-DP Algs} \label{line:conversion}
  \While{$\rho > 0$}{\label{line:check1}
    $i\gets 0$\;
    $\vec{g}_t \gets \sum_{i=1}^n \left(\grad
    \ell(\vec{w}_t;{d}_i) / \max(1, \frac{\norm{\grad
    \ell(\vec{w}_t)}_2}{C_{\mathsf{grad}}})\right)$\;
    $\widetilde{\vec{g}}_t \gets \vec{g}_t + N(0, (C_{\mathsf{grad}}^2/2\rho_{\mathsf{ng}})\mat{I})$\;
    $\rho \gets \rho - \rho_{\mathsf{ng}}$\; \label{line:deduct_grad}
    $\widetilde{\vec{g}}_t \gets \widetilde{\vec{g}}_t /
    \norm{\widetilde{\vec{g}}_t}_2$\;
    \While{$i = 0$}{
      $\Omega = \{f(\vec{w}_t-\alpha\widetilde{\vec{g}}_t): \alpha \in
      \Phi\}$\;
      $\rho \gets \rho - \rho_{\mathsf{nmax}}$\;
      $i \gets ${\scshape NoisyMax}(-$\Omega,
      C_{\mathsf{obj}}, \sqrt{2\rho_{\mathsf{nmax}}}$)\; \label{line:deduct_nmax}
      \eIf{$i > 0$}{
        \lIf{$\rho > 0$}{$\vec{w}_{t+1} \gets \vec{w}_t - \alpha_i\widetilde{\vec{g}}_t$}\label{line:check2}
      }{
        $\rho_{\mathsf{old}} \gets \rho_{\mathsf{ng}}$\;
        $\rho_{\mathsf{ng}} \gets (1 + \gamma)\rho_{\mathsf{ng}}$\;
        $\widetilde{\vec{g}}_t \gets \Gavg(\rho_{\mathsf{old}}, \rho_{\mathsf{ng}}, \vec{g}_t, \tilde{\vec{g}}_t, C_{\mathsf{grad}})$\;
        $\rho \gets \rho - (\rho_{\mathsf{ng}}-\rho_{\mathsf{old}})$\;\label{line:deduct_avg}
      }  
    }  
    $t \gets t + 1$\;
  }  
  \Return $\vec{w}_t$\;
  \Fn{\Gavg{$\rho_{\mathsf{old}}$, $\rho_{H}$, $\vec{g}$, $\tilde{\vec{g}}$, $C_{\mathsf{grad}}$}}{
     $\tilde{\vec{g}}_2 \gets \vec{g} + N(\vec{0}, (\frac{C_{\mathsf{grad}}^2}{2(\rho_H-\rho_{\mathsf{old}})})\mat{I})$\;
     $\tilde{S}\gets \frac{\rho_{\mathsf{old}}\tilde{g} + (\rho_H-\rho_{\mathsf{old}})\tilde{g}_2}{\rho_H}  $\;
   \Return $\tilde{S}$
 }
  \caption{DP-AGD}
  \label{alg:dpagd}
\end{algorithm}
\paragraph{Gradient approximation}
At each iteration, the algorithm computes the noisy gradient
$\widetilde{\vec{g}} = \grad f(\vec{w}_t) + \mathcal{N}(0,
\sigma^2\mat{I})$ using the Gaussian mechanism with variance 
$\sigma^2$. The magnitude of noise $\sigma^2$ is dependent on the
maximum influence one individual can have on $\vec{g}_t$, measured by
$\Delta_2(g)$. To bound this quantity, many prior
works~\cite{Chaudhuri2011Objpert,Kifer2012erm} assume that
$\norm{\vec{x}} \leq 1$. Instead, we use the 
gradient clipping technique of~\cite{Abadi2016deep}: compute the
gradient $\grad \ell(\vec{w}_t; d_i)$ for $i=1, \ldots, n$, clip the
gradient in $L_2$ norm by dividing it by $\max(1, \frac{\norm{\grad
    \ell(\vec{w}_t;d_i)}_2}{C_{\mathsf{grad}}})$, compute the sum, add
Gaussian noise with variance
$C_{\mathsf{grad}}^2/2\rho_{\mathsf{ng}}$, and finally
normalize it to a unit norm. This ensures that the 
$L_2$ sensitivity of gradient is bounded by $C_{\mathsf{grad}}$, and
satisfies $\rho_{\mathsf{ng}}$-zCDP by
Lemma~\ref{lem:gauss2zcdp}. 

\paragraph{Step size selection}
In non-private setting, stochastic optimization methods also use an
approximate gradient computed from a small set of randomly selected
data, called mini-batch, instead of an exact gradient. For example,
at iteration $t$, stochastic gradient descent (SGD) randomly picks an
index $i_t \in [n]$ and estimates the gradient $\grad \ell(\vec{w}_t;
d_{i_t})$ using one sample $d_{i_t}$. Consequently, each update
direction $-\grad \ell(\vec{w}_t;d_{i_t})$ might not be a descent
direction, but it is a descent direction in expectation since $\E[\grad
\ell(\vec{w}_t;d_{i_t})~|~\vec{w}_t] = \grad f(\vec{w}_t)$.

In contrast, in private setting an algorithm cannot rely on a
guarantee in expectation and need to use per-iteration 
privacy budget more efficiently. To best utilize the privacy budget,
we test whether a given a noisy estimate $\widetilde{\vec{g}}_t$ of
gradient is a descent direction using a portion of privacy budget
$\rho_{\mathsf{nmax}}$. First, the algorithm constructs a set
$\Omega=\{f(\vec{w}_t - \alpha\widetilde{\vec{g}}_t): \alpha \in
\Phi\}$, where each element of $\Omega$ is the objective value
evaluated at $\vec{w}_t - \alpha \widetilde{\vec{g}}_t$ and $\Phi$ is
the set of pre-defined step sizes. Then it determines which step size
yields the smallest objective value using the NoisyMax algorithm.
One difficulty in using the NoisyMax isgradient averaging
that there is no known a priori bound on a loss function $\ell$. To
bound the sensitivity of $\ell$, we apply the idea of 
gradient clipping to the objective function $f$. Given a fixed
clipping threshold $C_{\mathsf{obj}}$, we compute
$\ell(\vec{w}_t;\vec{x}_i)$ for $i=1, \ldots, n$, clip the values
greater than $C_{\mathsf{obj}}$, and take the summation of clipped
values. Note that, unlike the gradient, we use unnormalized value as
it doesn't affect the result of NoisyMax.

In our implementation, the first element of $\Phi$ is fixed to 0, so that
$\Omega$ always includes the current objective value $f(\vec{w}_t)$.
Let $i$ be the index returned by the NoisyMax. When $i>0$, the algorithm
updates $\vec{w}_t$ using the chosen step size $\alpha_i$. When $i =
0$, it is likely that $-\widetilde{\vec{g}}_t$ is not a descent
direction, and hence none of step sizes in $\Phi$ leads to a decrease
in objective function $f$.

\paragraph{Adaptive noise reduction}
When the direction $-\widetilde{\vec{g}}_t$ (obtained using the Gaussian
mechanism with parameter $\sigma$) is determined to be a bad
direction by the NoisyMax, DP-AGD increases the privacy budget for
noisy gradient approximation $\rho_{\mathsf{ng}}$ by a factor of
$1 + \gamma$. 
Since the current gradient was measured using
the previous budget share, $\rho_{old}$, we use $\rho_{\mathsf{ng}} -\rho_{\mathsf{old}}$ privacy budget with the gradient averaging technique to increase the accuracy of that measured gradient. 
%
The new direction is
checked by the NoisyMax again. This procedure is repeated until the
NoisyMax finds a descent direction (i.e., until it returns a non-zero
index). 

%

\paragraph{Composition}
The two main tools used in DP-AGD achieves different versions of
differential privacy;  NoisyMax satisfies $(\epsilon, 0)$-DP (pure) and $\frac{\epsilon^2}{2}$-zCDP, while
Gaussian mechanism can be used to provide zCDP. However, to compare our method to other algorithms that use approximate $(\epsilon,\delta)$-differential privacy, we need to use conversion tools given by Lemmas \ref{lem:eps2zcdp} and \ref{lem:zcdp2dp}.

 Given the fixed total privacy budget
$(\epsilon_{\mathsf{tot}}, \delta_{\mathsf{tot}})$, the algorithm
starts by converting $(\epsilon_{\mathsf{tot}},
\delta_{\mathsf{tot}})$-DP into $\rho$-zCDP using
Lemma~\ref{lem:zcdp2dp}.
This is done by solving the following inequality for $\rho$:
\begin{equation}
  \epsilon_{\mathsf{tot}} \geq \rho +
  2\sqrt{\rho\log(1/\delta_{\mathsf{tot}})}\,.
  \label{eq:rho_conv}
\end{equation}
Given the resulting total privacy budget $\rho$ for zCDP, the algorithm dynamically
computes and deducts the amount of required privacy budget
(lines~\ref{line:deduct_avg},~\ref{line:deduct_nmax}, \ref{line:deduct_grad}) whenever it 
needs an access to the database during the runtime, instead of
allocating them a priori. This guarantees that the entire run of
algorithm satisfies $\rho$-zCDP. 

\paragraph{Adjusting step sizes}
For two reasons, we dynamically adjust the range of step sizes in
$\Phi$.
First, the variance in private gradient estimates needs to be
controlled. The stochastic gradient descent algorithm
with constant step sizes in general does not guarantee convergence to
the optimum even for a well-behaving objective function (e.g.,
strongly convex).
To guarantee the convergence (in expectation), stochastic optimization
algorithms typically enforce the conditions $\sum_{t} \alpha_t =
\infty$ and $\sum_{t} \alpha_t^2 < \infty$ on their step
sizes~\cite{Robbins1951stochastic}, which ensures that the variance of
the updates reduces gradually near the optimum. Although we adaptively reduce
the magnitude of privacy noise using gradient averaging, it
still needs a way to effectively control the variance of the updates.
Second, in our algorithm, it is possible that $\widetilde{\vec{g}}_t$
is actually a descent direction but the NoisyMax fails to choose a
step size properly. This 
happens when the candidate step sizes in $\Phi$ are all large but the
algorithm can only make a small move (i.e., when the optimal step
size is smaller than all non-zero step sizes in $\Phi$).
To address these issues, we propose to monitor the step sizes chosen
by NoisyMax algorithm and adaptively control the range of step sizes
in $\Phi$.
We initialize $\Phi$ with equally spaced $m$ points between 0 and
$\alpha_{\mathsf{max}}$. At every $\tau$ iteration, we update
$\alpha_{\mathsf{max}} = (1 + \eta)\max(\alpha_t, \alpha_{t-1}, \ldots,
\alpha_{t-\tau+1})$, where $\alpha_t$ denotes the step size chosen at
iteration $t$.
\footnote{In our experiments, we set $m=20$, $\tau=10$, and $\eta=0.1$
  and initialize $\alpha_{\mathsf{max}}=2$.}
We empirically observe
that this allows DP-AGD to adaptively change the range of step sizes
based on the relative location of the current iterate to the optimum.

\paragraph{Correctness of Privacy}
The correctness of the algorithm depends on $\rho$-zCDP composition (Lemma \ref{lem:comp}) and accounting for the privacy cost of each primitive.
\begin{theorem}Algorithm \ref{alg:dpagd} satisfies $\rho$-zCDP and $(\epsilon_{\mathsf{tot}},\delta_{\mathsf{tot}})$-differential privacy.
\end{theorem}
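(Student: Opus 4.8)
The plan is to prove the $\rho$-zCDP guarantee first and then obtain the $(\epsilon_{\mathsf{tot}},\delta_{\mathsf{tot}})$-DP statement as an immediate corollary. For the latter, observe that line~\ref{line:conversion} sets $\rho$ to a value satisfying \eqref{eq:rho_conv}, which is exactly the hypothesis of Lemma~\ref{lem:zcdp2dp}. Hence, once the whole algorithm is shown to be $\rho$-zCDP, Lemma~\ref{lem:zcdp2dp} upgrades this to $(\rho + 2\sqrt{\rho\log(1/\delta_{\mathsf{tot}})},\,\delta_{\mathsf{tot}})$-DP, and the choice of $\rho$ makes the first coordinate at most $\epsilon_{\mathsf{tot}}$. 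So the entire burden is the $\rho$-zCDP claim.

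For the zCDP claim I would first isolate every operation that touches the data. There are exactly three: (i) the noisy gradient deducted in line~\ref{line:deduct_grad}, (ii) each \textsc{NoisyMax} call deducted in line~\ref{line:deduct_nmax}, and (iii) the extra noisy measurement $\tilde{\vec{g}}_2$ inside \textsc{GradAvg}. I would bound each one's per-call cost separately. For (i), clipping each per-example gradient to $L_2$ norm $C_{\mathsf{grad}}$ before summing makes the $L_2$ sensitivity of the sum at most $C_{\mathsf{grad}}$ (adding or removing one observation changes the sum by a single clipped gradient), so Lemma~\ref{lem:gauss2zcdp} with variance $C_{\mathsf{grad}}^2/(2\rho_{\mathsf{ng}})$ gives $\rho_{\mathsf{ng}}$-zCDP; the subsequent normalization is data-independent post-processing. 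For (ii), clipping each loss to $C_{\mathsf{obj}}$ bounds the $L_1$ sensitivity of every entry of $\Omega$ by $C_{\mathsf{obj}}$, and conditioned on the already-released $\widetilde{\vec{g}}_t$ the candidate points $\vec{w}_t-\alpha\widetilde{\vec{g}}_t$ are fixed; calling \textsc{NoisyMax} with $\epsilon=\sqrt{2\rho_{\mathsf{nmax}}}$ is therefore $\epsilon$-DP, which Lemma~\ref{lem:eps2zcdp} converts to $\tfrac{\epsilon^2}{2}=\rho_{\mathsf{nmax}}$-zCDP. For (iii), $\tilde{\vec{g}}_2$ is again a Gaussian mechanism on a clipped-gradient sum (sensitivity $C_{\mathsf{grad}}$) with variance $C_{\mathsf{grad}}^2/(2(\rho_{\mathsf{ng}}-\rho_{\mathsf{old}}))$, hence $(\rho_{\mathsf{ng}}-\rho_{\mathsf{old}})$-zCDP by Lemma~\ref{lem:gauss2zcdp}, and the convex combination forming $\tilde{S}$ is post-processing of $\widetilde{\vec{g}}_t$ and $\tilde{\vec{g}}_2$, so it incurs no additional cost (this is exactly the gradient-averaging analysis of Section~\ref{sec:gradavg}).

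Next I would assemble these bounds by composition. Each data access is chosen adaptively --- which operation runs, and with what budget, depends on the previously released noisy quantities (in particular on the index $i$ returned by \textsc{NoisyMax} and on the adaptively grown $\rho_{\mathsf{ng}}$) --- so I would use the adaptive form of Lemma~\ref{lem:comp}, applied inductively, to conclude that a run performing operations with costs $\rho^{(1)},\rho^{(2)},\dots$ is $\big(\sum_j\rho^{(j)}\big)$-zCDP. It then remains to show $\sum_j\rho^{(j)}\le\rho$, the initial budget. This is enforced by the bookkeeping variable $\rho$: it starts at the value from \eqref{eq:rho_conv} and is decremented by exactly the cost of each primitive in lines~\ref{line:deduct_grad},~\ref{line:deduct_nmax},~and~\ref{line:deduct_avg}, while the outer loop guard in line~\ref{line:check1} permits another iteration only while $\rho>0$.

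The main obstacle is making this last step fully rigorous, because the number of primitives and their budgets form a \emph{data-dependent} stopping rule that the two-mechanism statement of Lemma~\ref{lem:comp} does not directly cover, and because the algorithm deducts budget \emph{after} each access rather than before. I would address this with a privacy-filter (fully adaptive composition) argument for zCDP: along every execution path the accumulated cost relevant to the output is capped at the initial $\rho$, so the filtered composition is $\rho$-zCDP. The delicate point is the single access that can drive $\rho$ non-positive; here the guard \texttt{if }$\rho>0$ at line~\ref{line:check2} is what saves us, since when the running budget is exhausted the candidate update is discarded and the algorithm returns the last in-budget iterate, so the released output never incorporates an over-budget measurement. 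Verifying that this guard indeed severs the output's dependence on any query beyond the $\rho$ budget is the crux; everything else is routine sensitivity accounting and application of Lemmas~\ref{lem:gauss2zcdp}--\ref{lem:zcdp2dp}.
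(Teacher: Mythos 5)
Your proposal follows essentially the same route as the paper's own proof: convert $(\epsilon_{\mathsf{tot}},\delta_{\mathsf{tot}})$ into a zCDP budget $\rho$ via \eqref{eq:rho_conv} and Lemma~\ref{lem:zcdp2dp}, account for the three data-touching primitives (noisy gradient, \textsc{NoisyMax} via Lemma~\ref{lem:eps2zcdp}, and \textsc{GradAvg}) through the deductions at lines~\ref{line:deduct_grad}, \ref{line:deduct_nmax}, and~\ref{line:deduct_avg}, compose with Lemma~\ref{lem:comp}, and rely on the guards at lines~\ref{line:check1} and~\ref{line:check2} so that the released iterate never incorporates an over-budget measurement. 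If anything, you are more explicit than the paper about the one delicate point --- that data-dependent budgets and a data-dependent stopping rule are not literally covered by the two-mechanism statement of Lemma~\ref{lem:comp} --- which the paper's proof resolves only with the same informal ``discard the primitives performed after the last safe iterate'' argument that you give.
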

\begin{proof}
If no zCDP privacy budget $\rho$ is given to the algorithm, then the algorithm expects values for $\epsilon_{\mathsf{tot}}$ and $\delta_{\mathsf{tot}}$. It then figures out, in Line \ref{line:conversion}, the proper value of $\rho$ such that $\rho$-zCDP also satisfies the weaker  $(\epsilon_{\mathsf{tot}},\delta_{\mathsf{tot}})$-differential privacy.

Then, the algorithm works in pure zCDP mode, subtracting from its running budget the cost incurred by its three primitive operations: measuring the noisy gradient (Line \ref{line:deduct_grad}), performing a noisy max (Line \ref{line:deduct_nmax}), and gradient averaging whenever necessary (Line \ref{line:deduct_avg}). 

There are two checks that make sure the remaining privacy budget $\rho$ is above $0$, Line \ref{line:check1} and Line \ref{line:check2}. The most important check is in Line \ref{line:check2} -- it makes sure that any time we are updating the weights, we have not exhausted our privacy budget. This is the important step because the weights $\vec{w}_t$ are the only results that become visible outside the algorithm. For example, suppose the weights $\vec{w}_{t+1}$ were updated and the remaining privacy budget $\rho$ is greater than $0$, then it would be safe to release $\vec{w}_{t+1}$. Let us suppose now that the algorithm continues to the next iterations, but the deduction from measuring the noisy gradient, or NoisyMax or GradAvg causes the privacy budget $\rho$ to be negative. In this case, when we finally get to Line \ref{line:check2}, we do not update the weights, so we end up discarding the results of all of the primitives that were performed after the safe value for $\vec{w}_{t+1}$ had been computed. Subsequently, Line \ref{line:check1} will cause the algorithm to terminate. The result would be $\vec{w}_{t+1}$ which was already safe to release.

As these primitive operations use the correct share of the privacy budget they are given, the overall algorithm satisfies $\rho$-zCDP.
\end{proof}


\section{Experimental Results}
\label{sec:experiments}
In this section, we evaluate the performance of DP-AGD on 5 real
datasets:
\begin{inparaenum}[(i)]
\item \texttt{Adult}~\cite{Chang2011libsvm,Lichman2013UCI} dataset contains 48,842
  records of individuals from 1994 US Census.
\item \texttt{BANK}~\cite{Lichman2013UCI} contains marketing campaign
  related information about customers of a Portuguese banking institution.
\item \texttt{IPUMS-BR} and
\item \texttt{IPUMS-US} datasets are also Census data extracted from
  IPUMS-International~\cite{IPUMS}, and they contain 38,000 and 40,000
  records, respectively.
\item \texttt{KDDCup99} dataset contains attributes extracted from
  simulated network packets.
\end{inparaenum}
\begin{table}[tp]
  \begin{tabular}{cccl}
    \toprule
    Dataset & Size ($n$) & Dime. & Label \\ \midrule
    Adult & 48,842 & 124 & Is annual income > 50k? \\
    BANK & 45,211 & 33 & Is the product subscribed? \\
    IPUMS-US & 40,000 & 58 & Is annual income > 25k?\\
    IPUMS-BR & 38,000 & 53 & Is monthly income > \$300\\
    KDDCup99 & 4,898,431 & 120 & Is it a DOS attack? \\
    \bottomrule
  \end{tabular}
  \caption{Characteristics of datasets}
  \label{tab:datasets}
\end{table}
Table~\ref{tab:datasets} summarizes the characteristics of datasets
used in our experiments. 

\paragraph{Baselines}
We compare DP-AGD~\footnote{Python code for our experiments is available at \url{https://github.com/ppmlguy/DP-AGD}.}
against seven baseline algorithms, namely, 
\textsf{ObjPert}~\cite{Kifer2012erm,Chaudhuri2011Objpert},
\textsf{OutPert}~\cite{Zhang2017RR},
\textsf{PrivGene}~\cite{Zhang2013privgene},
\textsf{SGD-Adv}~\cite{Bassily2014PERM},
\textsf{SGD-MA}~\cite{Abadi2016deep},
\textsf{NonPrivate}, and \textsf{Majority}.
\textsf{ObjPert} is an objective perturbation method that adds a
linear perturbation term to the objective function.
\textsf{OutPert} is an output perturbation method that runs the
(non-private) batch gradient descent algorithm for a fixed number of
steps and then releases an output perturbed with Gaussian noise.
\textsf{PrivGene} is a differentially private genetic algorithm-based
model fitting framework. 
\textsf{SGD-Adv} is a differentially private version of SGD algorithm
that applies advanced composition theorem together with privacy
amplification result.
\textsf{SGD-MA} is also a private SGD algorithm but it uses an
improved composition method, called moments accountant, tailored to
the Gaussian noise distribution.
\textsf{NonPrivate} is an optimization algorithm that does not satisfy
differential privacy. In our experiments, to get the classification
accuracy of non-private method, we used the L-BFGS
algorithm~\cite{Nocedal1980updating}.
Finally, \textsf{Majority} predicts the
label by choosing the class with larger count. For example, if the
number of tuples with the label $y_i=1$ in the training set is greater
than $n/2$, it predicts that $y_i=1$ for all tuples.

In our experiments, we report both classification accuracy (i.e., the
fraction of correctly classified examples in the test set) and final
objective value (i.e., the value of $f$ at the last iteration). All
the reported numbers are 
averaged values over 20 times repeated 5-fold cross-validation.

\paragraph{Parameter settings}
When there are known default parameter settings for the prior works, we
used the same settings. Throughout all the experiments the value of
privacy parameter $\delta$ is fixed to $10^{-8}$ for the \texttt{Adult},
\texttt{BANK}, \texttt{IPUMS-US}, and \texttt{IPUMS-BR} datasets and to
$10^{-12}$ for the \texttt{KDDCup99} dataset. According to the common
practice in optimization, the sizes of mini-batches for SGD-ADV and SGD-MA are set to
$\sqrt{n}$. Since \texttt{KDDCup99} dataset contains approximately 5
million examples, directly computing gradients using the entire
dataset requires unacceptable computation time. To reduce the
computation time, at each iteration, we evaluate 
the gradient from a random subset of data (called mini-batch). For
\texttt{KDDCup99} dataset, we applied the mini-batch technique to both DP-AGD
and PrivGene, and fixed the mini-batch size to
40,000. We exclude \textsf{OutPert} from the experiments on
\texttt{KDDCup99} dataset as its sensitivity analysis requires using
full-batch gradient.

\textsf{OutPert} method has multiple parameters that
significantly affect its performance. For example, the number of
iterations $T$ and step size $\alpha$. We heavily tuned the parameters
$T$ and $\alpha$ by running \textsf{OutPert} on each
dataset with varying $T$ and $\alpha$, and chose the one that led to
the best performance.

To determine the regularization coefficient values, we find a fine-tuned
value of $\lambda$ using an L-BFGS algorithm with grid search. 
For SGD-Adv and SGD-MA, we used different coefficient values as these
two algorithms use mini-batched gradients.

\subsection{Preprocessing}
Since all the datasets used in our experiments contain both numerical
and categorical attributes, 
we applied the following common preprocessing operations in machine
learning practice. We transformed every categorical attribute
into a set of binary variables by creating one binary variable for
each distinct category (i.e., one-hot encoding), and then every numerical
attribute is rescaled into the range $[0, 1]$ to ensure that all
attributes have the same scale. 
Additionally, for \textsf{ObjPert}, we normalize each observation
to a unit norm (i.e., $\norm{\vec{x}_i}_2 = 1$ for $i=1, 2, \ldots,
n$) to satisfy its requirement.

\subsection{Effects of Parameters}
We first demonstrate the impact of internal parameter settings on the
performance of DP-AGD. For this set of experiments, we perform a
logistic regression task on the \texttt{Adult} dataset.
The proposed algorithm has several internal parameters, 
and in this set of experiments we show that in general its performance
is relatively robust to their settings.
\begin{figure*}[tp]
  \centering
  \begin{subfigure}[t]{0.49\textwidth}
    \centering
    \includegraphics[width=.49\textwidth]{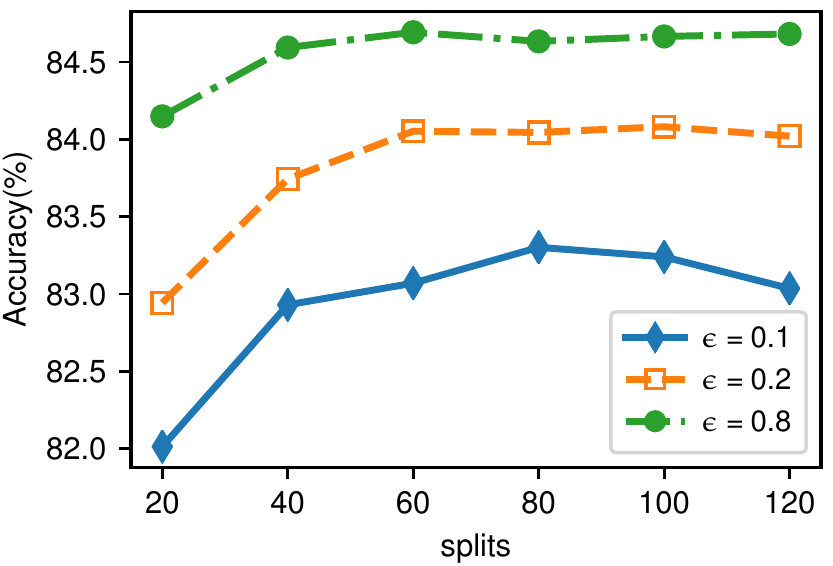}
    \includegraphics[width=.49\textwidth]{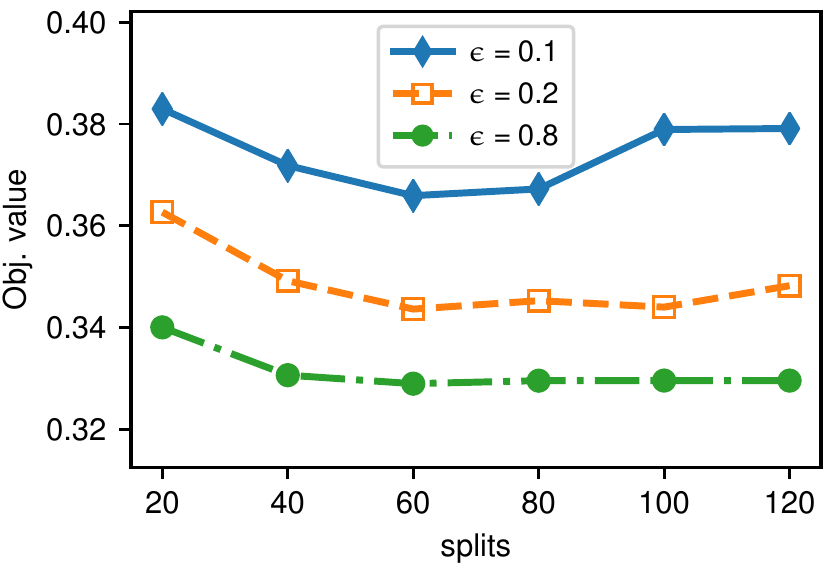}
    \caption{Effect of \textsf{splits} (left: accuracy, right: obj.
      value)}
    \label{fig:splits}
  \end{subfigure}
  \begin{subfigure}[t]{0.49\textwidth}
    \centering
    \includegraphics[width=.49\textwidth]{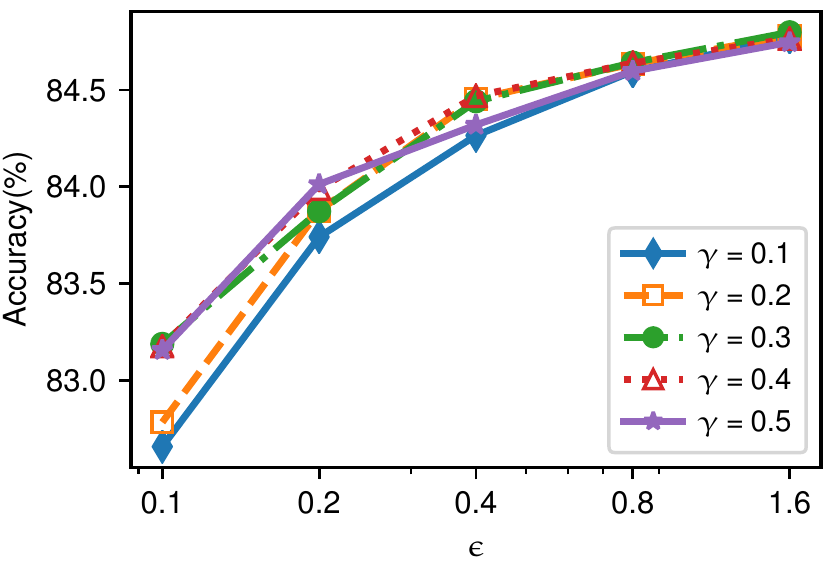}
    \includegraphics[width=.49\textwidth]{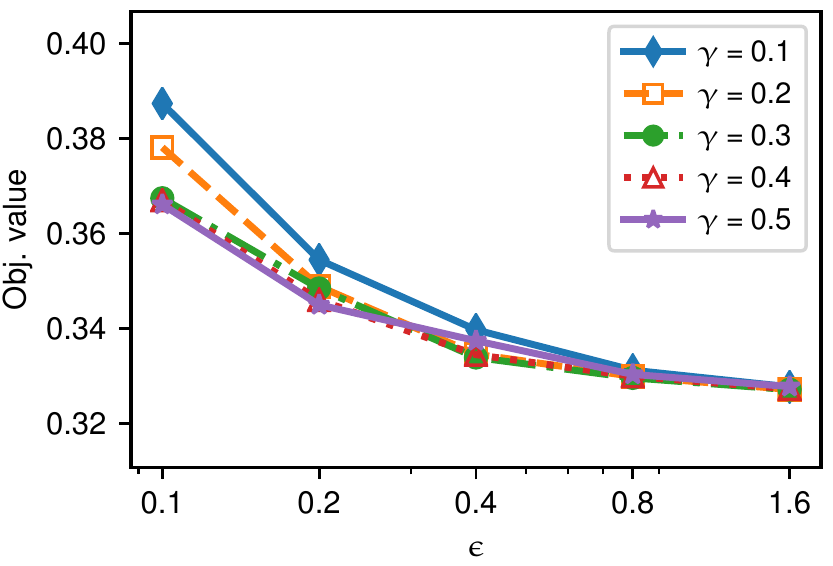}
    \caption{Effect of $\gamma$ (left: accuracy, right: obj. value)}
    \label{fig:gamma}
  \end{subfigure}~\\
  \begin{subfigure}[t]{0.49\textwidth}
    \centering
    \includegraphics[width=.49\textwidth]{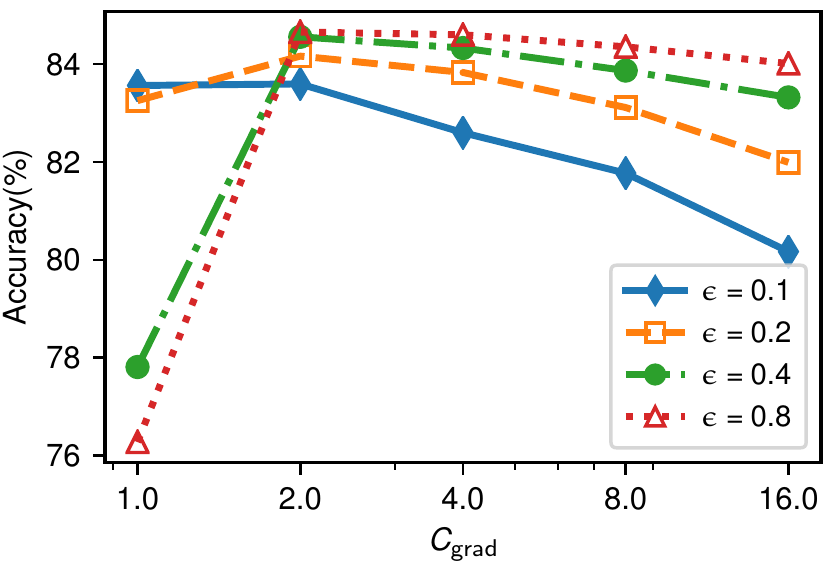}
    \includegraphics[width=.49\textwidth]{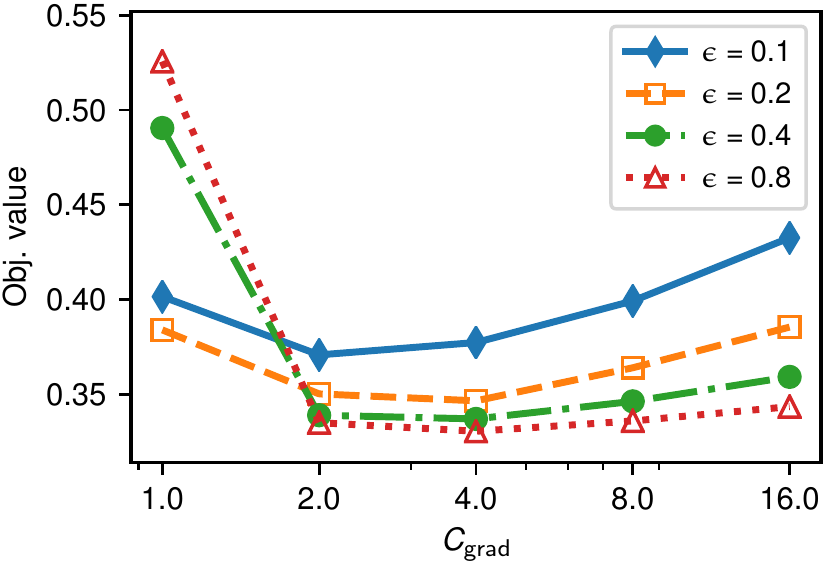}
    \caption{Effect of $C_{\mathsf{grad}}$ (left: accuracy, right: obj. value)}
    \label{fig:cgrad}
  \end{subfigure}
  \begin{subfigure}[t]{0.49\textwidth}
    \centering
    \includegraphics[width=.49\textwidth]{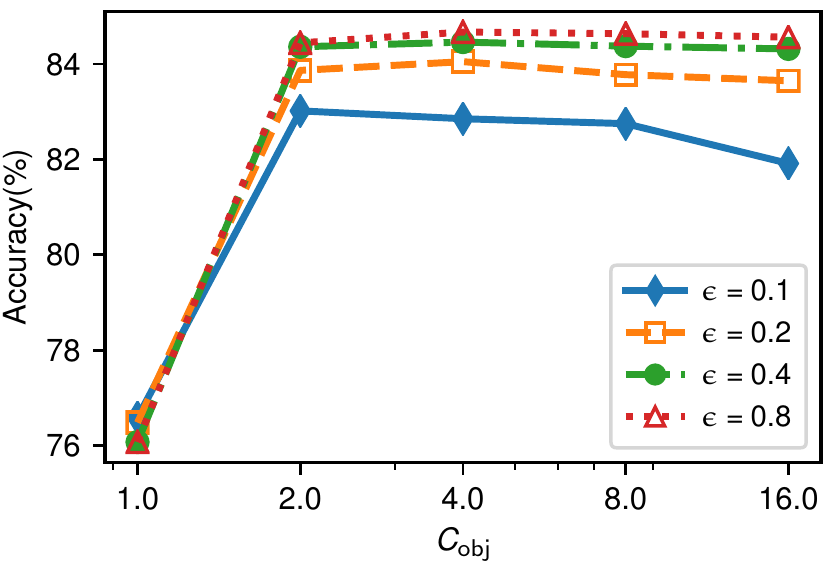}
    \includegraphics[width=.49\textwidth]{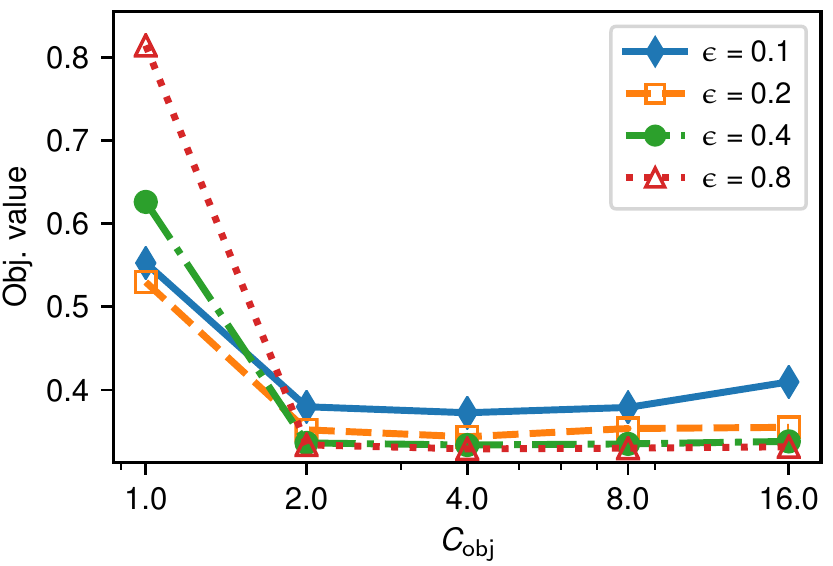}
    \caption{Effect of $C_{\mathsf{obj}}$ (left: accuracy, right: obj. value)}
    \label{fig:cobj}
  \end{subfigure}
  \caption{Effects of parameters}
\end{figure*}

In our experiments, to determine the initial privacy budget
parameters $\rho_{\mathsf{nmax}}$ and $\rho_{\mathsf{ng}}$, we first
compute their corresponding $\epsilon_{\mathsf{namx}}$ and
$\epsilon_{\mathsf{ng}}$ values as follows:
\[
  \epsilon_{\mathsf{nmax}} =\epsilon_{\mathsf{ng}} =
  \frac{\epsilon_{\mathsf{tot}}}{2\cdot \mathsf{splits}}\,. 
\]
Then these values are converted back to $\rho_{\mathsf{nmanx}}$ and
$\rho_{\mathsf{ng}}$, respectively, using Lemma~\ref{lem:gauss2zcdp}
and~\ref{lem:eps2zcdp}. 
The intuition behind this setting is that the value of \textsf{splits}
roughly represents the number of iterations under the naive (linear)
composition.
We fixed \textsf{splits}=60 for all
experiments. Figure~\ref{fig:splits} shows the impacts of
\textsf{splits} on the algorithm's performance. From the figure, we
see the performance of DP-AGD is relatively less affected by the choice of
\textsf{splits} when $\epsilon$ is large. When $\epsilon=0.1$,
excessively small or large value of \textsf{splits} can degrade the
performance. When \textsf{splits} is set too small, the algorithm may
not have enough number of iterations to converge to the optimum. On
the other hand, when the value of \textsf{splits} is too large, the
algorithm may find it difficult to discover good search directions.

Figure~\ref{fig:gamma} describes how classification accuracy
and objective value change with varying values of $\gamma$. The parameter
$\gamma$ controls how fast the algorithm increases
$\epsilon_{\mathsf{ng}}$ when the gradient for the current iteration
is not a descent direction. As it can be seen from the figure, when
$\gamma > 0.2$, the value of $\gamma$ almost has no impact on the
performance. On the other hand, when $\gamma=0.1$ or $\gamma=0.2$, the
performance is slightly affected by the setting. This is because, when
$\gamma$ is too small, the new estimate of gradient might be still too
noisy, and as a result the algorithm spends more privacy budget on
executing NoisyMax.

The algorithm has two threshold parameters, $C_{\mathsf{grad}}$ and
$C_{\mathsf{obj}}$. These two parameters are used to bound the
sensitivity of gradient and objective value computation,
respectively. If these parameters are set to a too small value, it
significantly reduces the sensitivity but at the same time it can
cause too much information loss in the estimates. Conversely, if they
are set too high, the sensitivity becomes high, resulting in adding
too much noise to the estimates. In our experiments, both
$C_{\mathsf{obj}}$ and $C_{\mathsf{grad}}$ are fixed to 3.0.

In Figure~\ref{fig:cgrad}, to see the impact of $C_{\mathsf{grad}}$ on
the performance, we fix the objective clipping threshold
$C_{\mathsf{obj}}$ to our default value 3.0 and vary
$C_{\mathsf{grad}}$ from 1.0 to 16.0. The figure illustrates how the
accuracy and the final objective value change with varying values of
$C_{\mathsf{grad}}$. We observe that excessively large or small
threshold values can degrade the performance. 
As explained above, this is because of the
trade-off between high sensitivity and information loss.

Figure~\ref{fig:cobj} shows how $C_{\mathsf{obj}}$ affects the
accuracy and the final objective value. As it was the case for
$C_{\mathsf{grad}}$, too large or small values have a negative effect
on the performance, while the moderate values
($2\leq C_{\mathsf{obj}} \leq 8$) have little impact on the
performance. 

\subsection{Logistic Regression and SVM}
We applied our DP-AGD algorithm to a regularized logistic regression
model in which the goal is
\[
  \begin{aligned}
    & \underset{\vec{w}}{\min} & &
  \frac{1}{n}\sum_{i=1}^n \log(1 +
  \exp(-y_i\vec{w}^\intercal\vec{x}_i)) + \frac{\lambda}{2}\norm{\vec{w}}_2^2\,,
  \end{aligned}
\]
where $\vec{x}_i \in \R^{p+1}$, $y_i \in \{-1, +1\}$, and $\lambda >
0$ is a regularization coefficient.

\begin{figure*}[htp]
  \centering
  \begin{subfigure}[t]{\textwidth}
    \centering
    \includegraphics[width=7in]{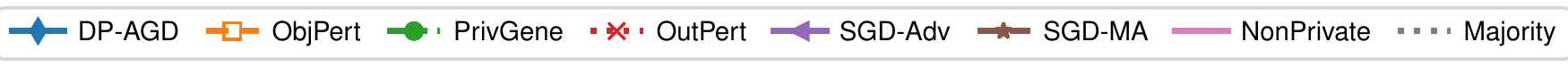}    
  \end{subfigure}~\\
  \begin{subfigure}[t]{0.24\textwidth}
    \centering
    \includegraphics[width=\textwidth]{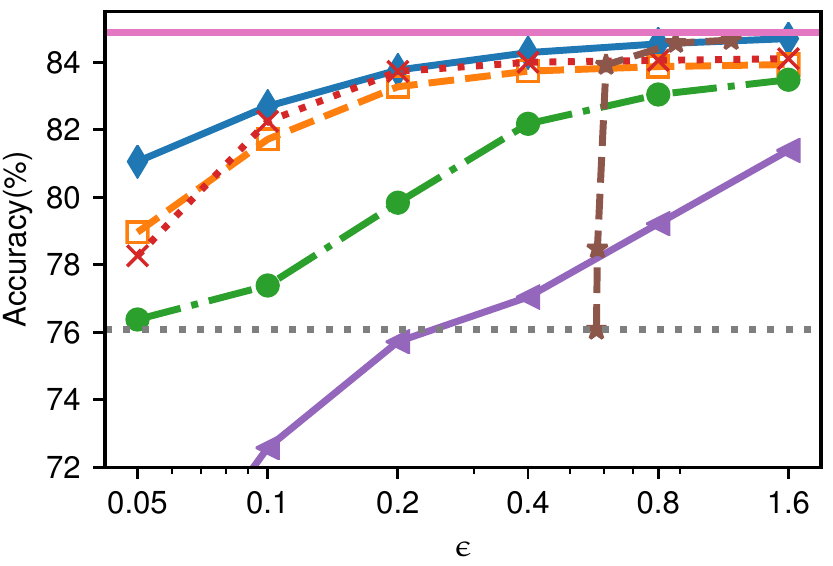}
  \end{subfigure}
  \begin{subfigure}[t]{0.24\textwidth}
    \centering
    \includegraphics[width=\textwidth]{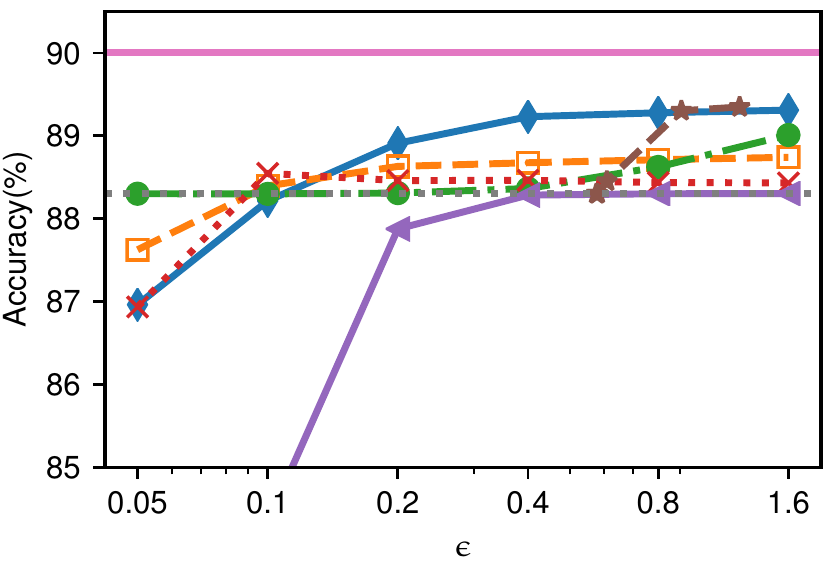}
  \end{subfigure}
    \begin{subfigure}[t]{0.24\textwidth}
    \centering
    \includegraphics[width=\textwidth]{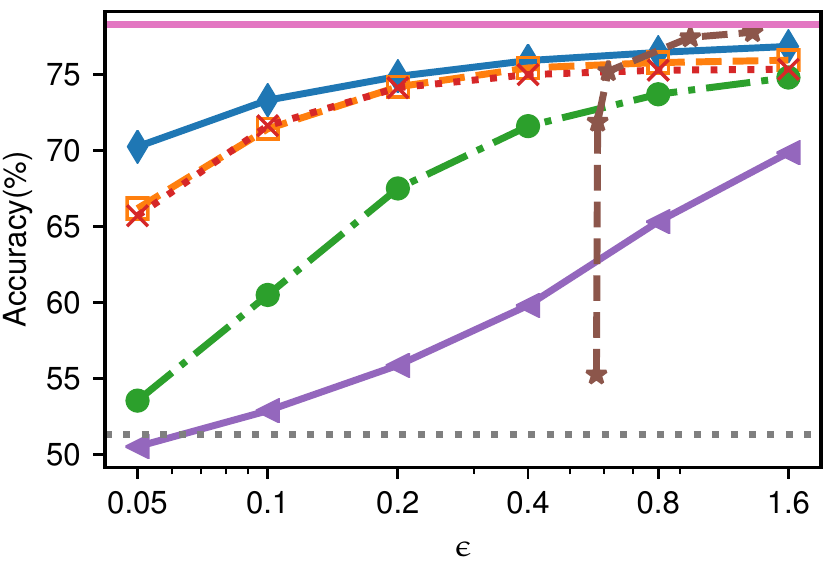}    
  \end{subfigure}
    \begin{subfigure}[t]{0.24\textwidth}
    \centering
    \includegraphics[width=\textwidth]{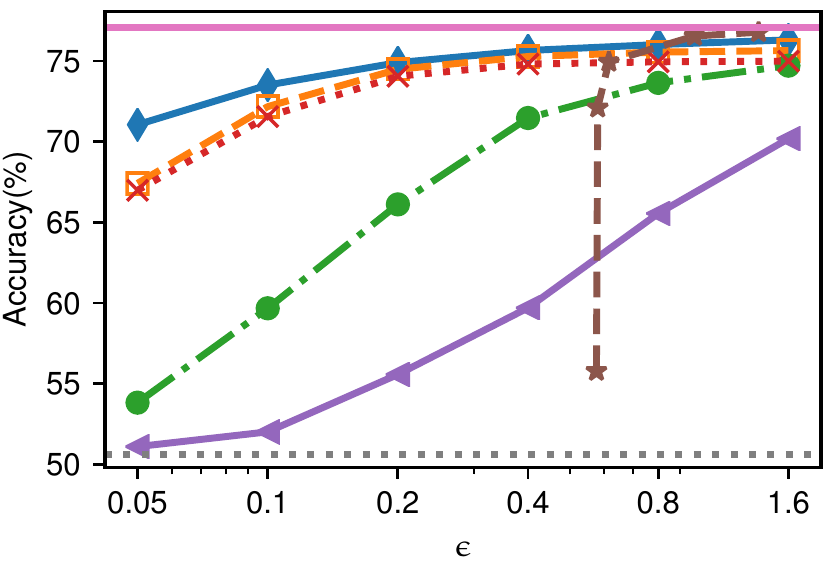}
  \end{subfigure}~\\
    \begin{subfigure}[t]{0.24\textwidth}
    \centering
    \includegraphics[width=\textwidth]{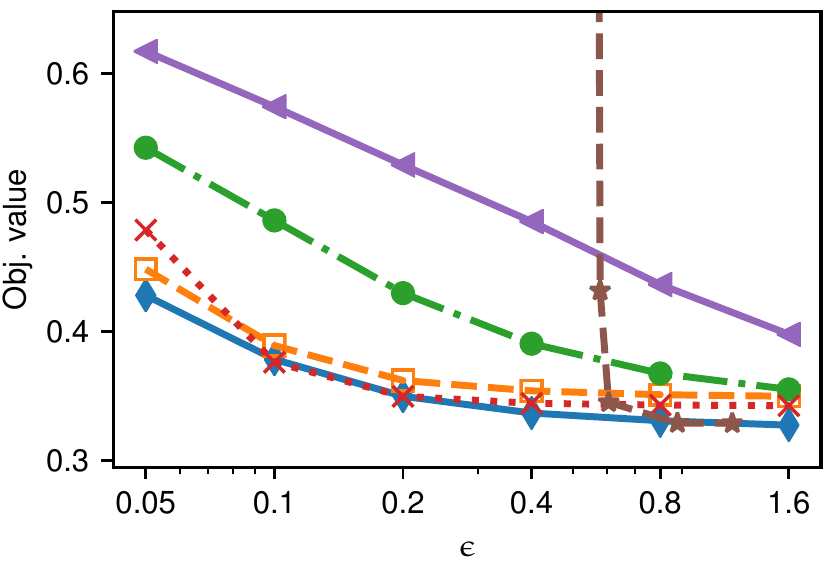}
    \caption{Adult}
  \end{subfigure}
  \begin{subfigure}[t]{0.24\textwidth}
    \centering
    \includegraphics[width=\textwidth]{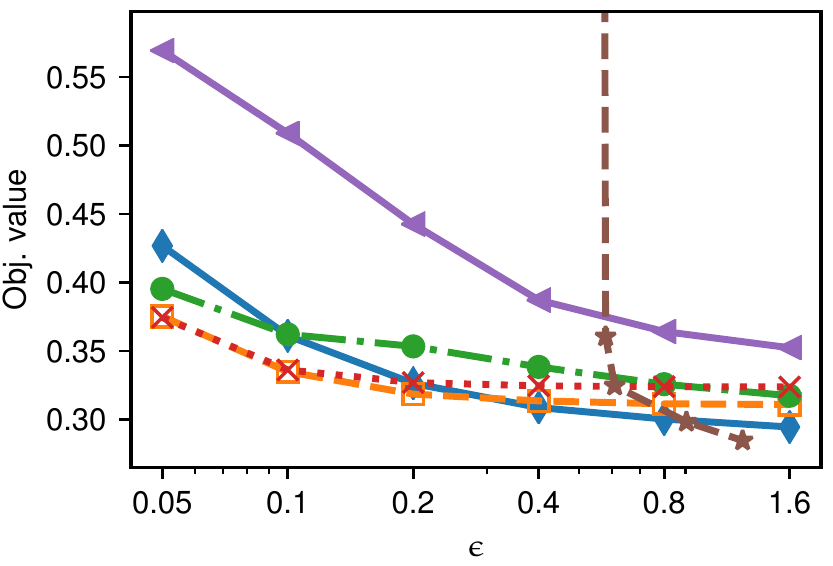}
    \caption{BANK}
  \end{subfigure}
    \begin{subfigure}[t]{0.24\textwidth}
    \centering
    \includegraphics[width=\textwidth]{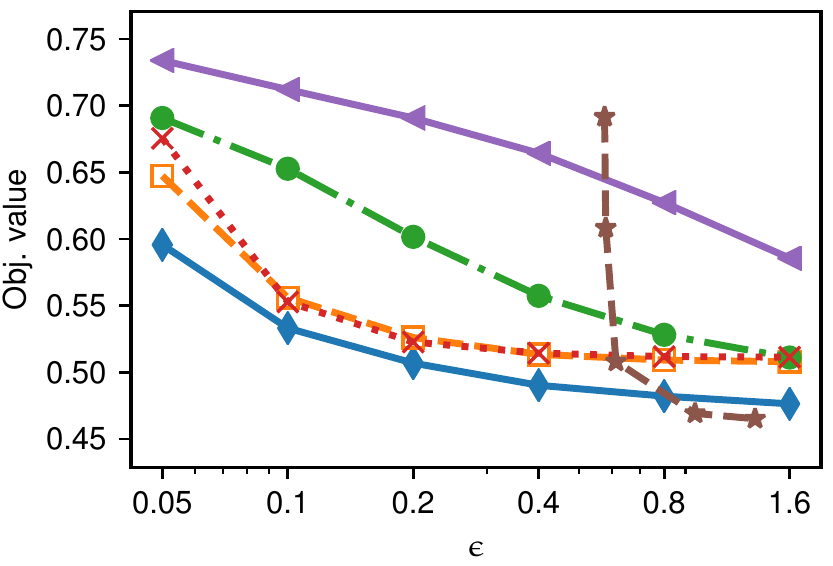}
    \caption{IPUMS-US}
  \end{subfigure}
    \begin{subfigure}[t]{0.24\textwidth}
    \centering
    \includegraphics[width=\textwidth]{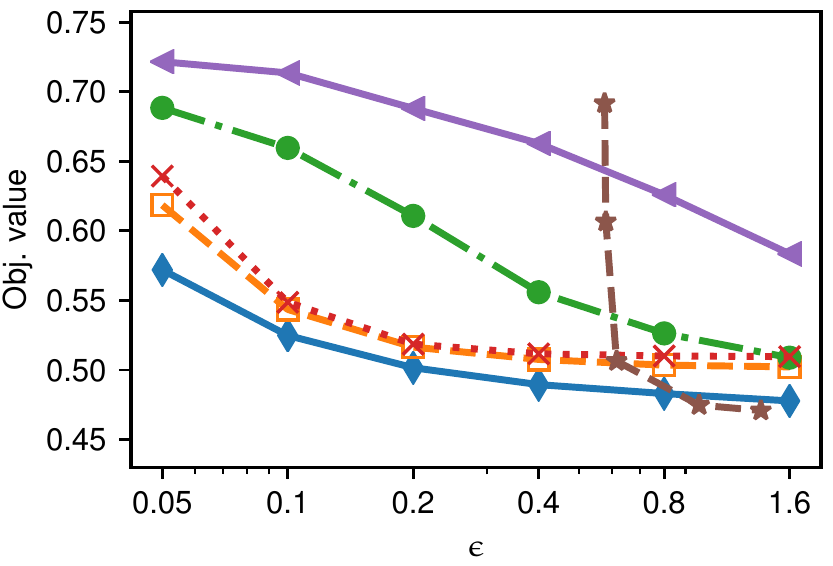}
    \caption{IPUMS-BR}
  \end{subfigure}
  \caption{Logistic regression by varying $\epsilon$ (Top:
    classification accuracies, Bottom: objective values)}
  \label{fig:logreg_4}
\end{figure*}
\begin{figure*}[tp]
  \centering
  \begin{subfigure}[t]{\textwidth}
    \centering
    \includegraphics[width=7in]{legend_cmp_all}    
  \end{subfigure}~\\
  \begin{subfigure}[t]{0.24\textwidth}
    \centering
    \includegraphics[width=\textwidth]{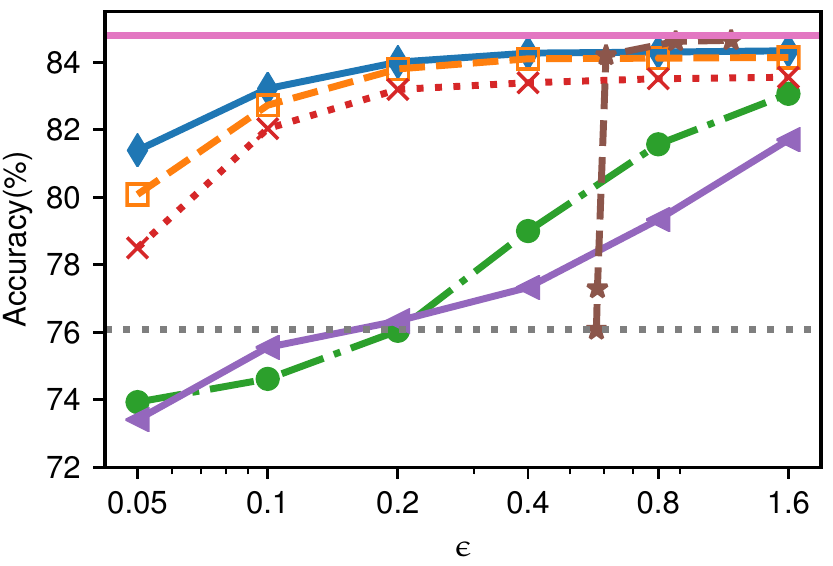}
  \end{subfigure}
  \begin{subfigure}[t]{0.24\textwidth}
    \centering
    \includegraphics[width=\textwidth]{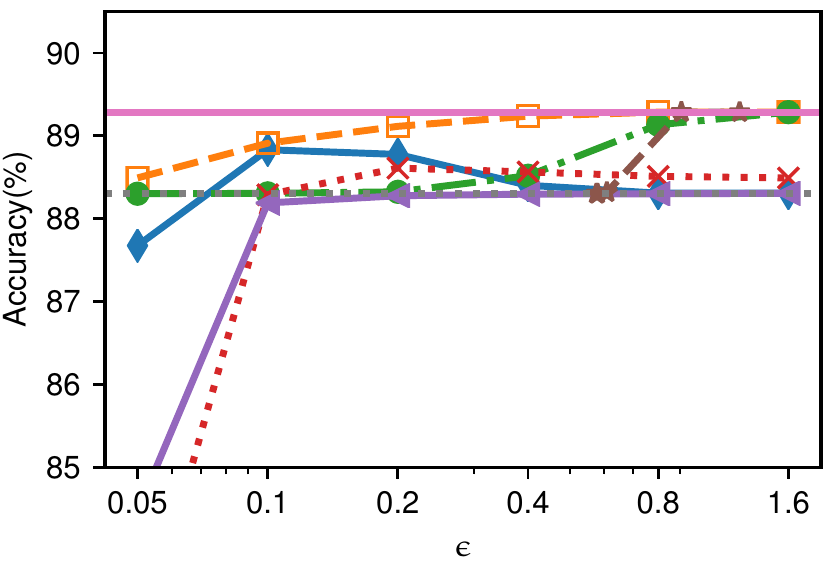}
  \end{subfigure}
    \begin{subfigure}[t]{0.24\textwidth}
    \centering
    \includegraphics[width=\textwidth]{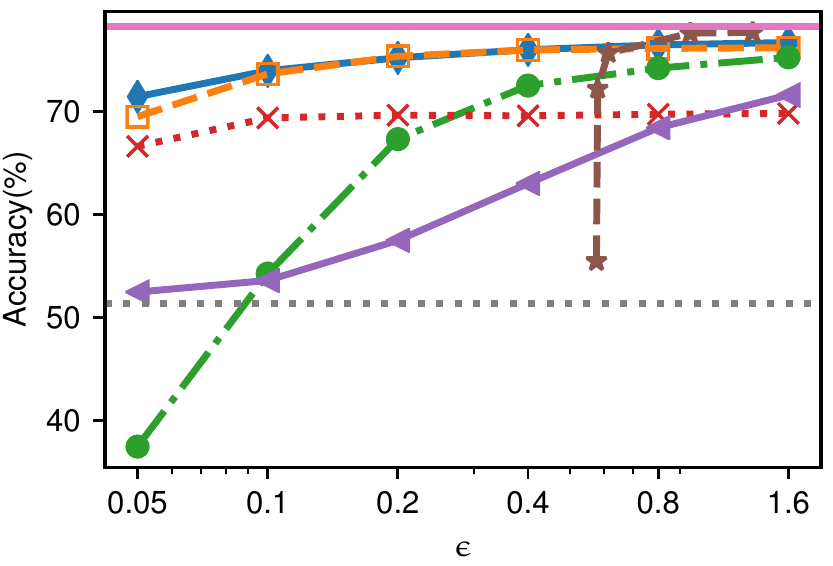}    
  \end{subfigure}
    \begin{subfigure}[t]{0.24\textwidth}
    \centering
    \includegraphics[width=\textwidth]{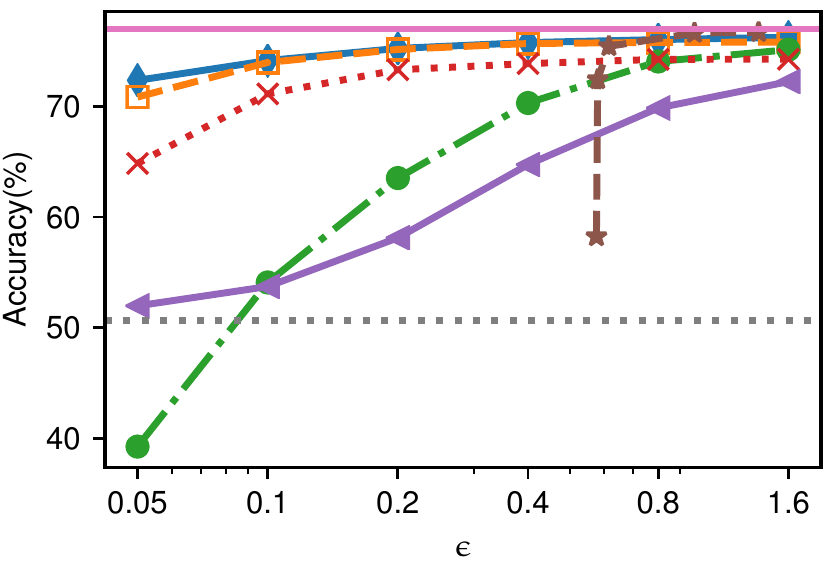}
  \end{subfigure}~\\
    \begin{subfigure}[t]{0.24\textwidth}
    \centering
    \includegraphics[width=\textwidth]{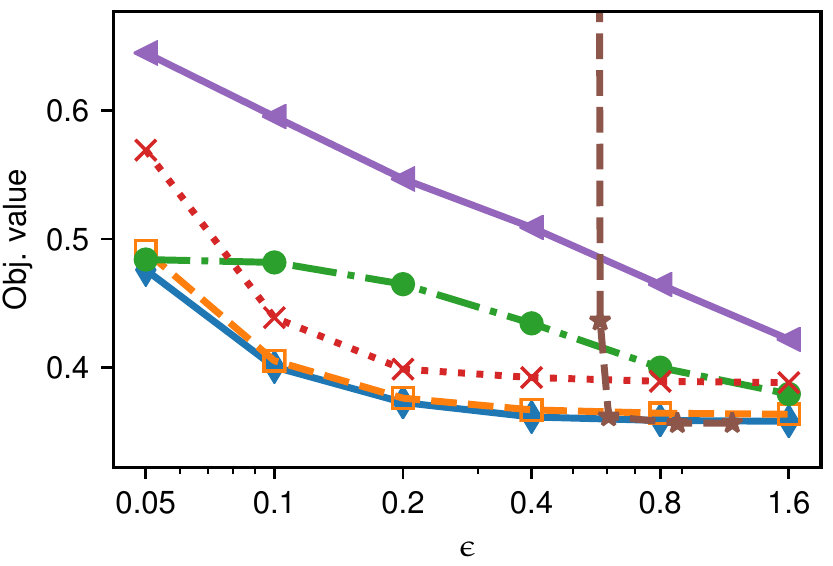}
    \caption{Adult}
  \end{subfigure}
  \begin{subfigure}[t]{0.24\textwidth}
    \centering
    \includegraphics[width=\textwidth]{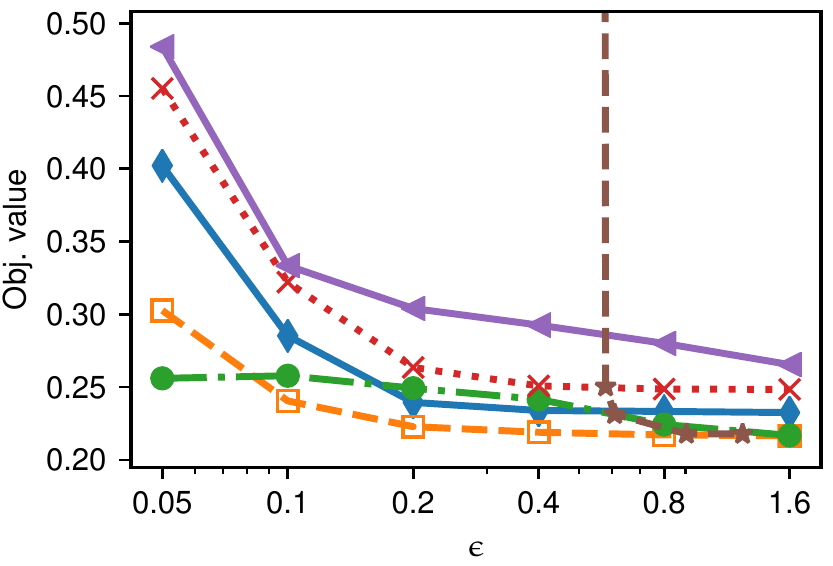}
    \caption{BANK}
  \end{subfigure}
    \begin{subfigure}[t]{0.24\textwidth}
    \centering
    \includegraphics[width=\textwidth]{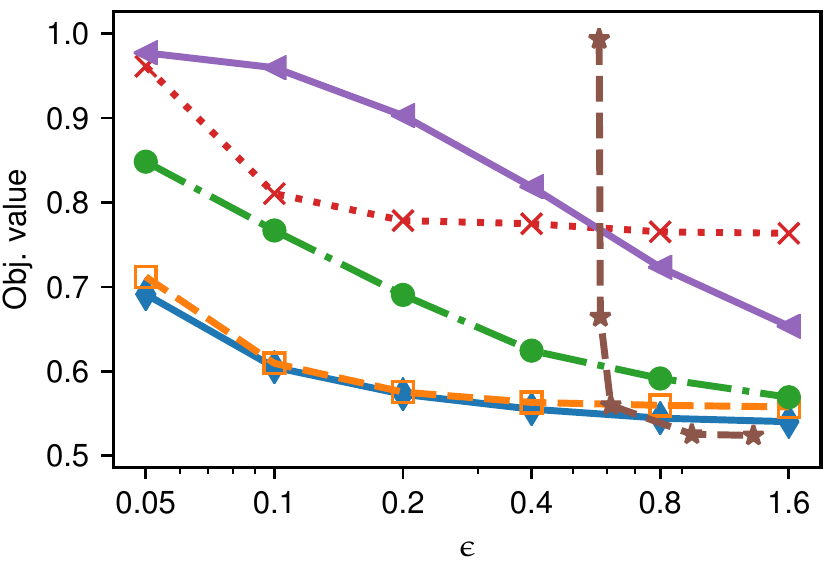}
    \caption{IPUMS-US}
  \end{subfigure}
    \begin{subfigure}[t]{0.24\textwidth}
    \centering
    \includegraphics[width=\textwidth]{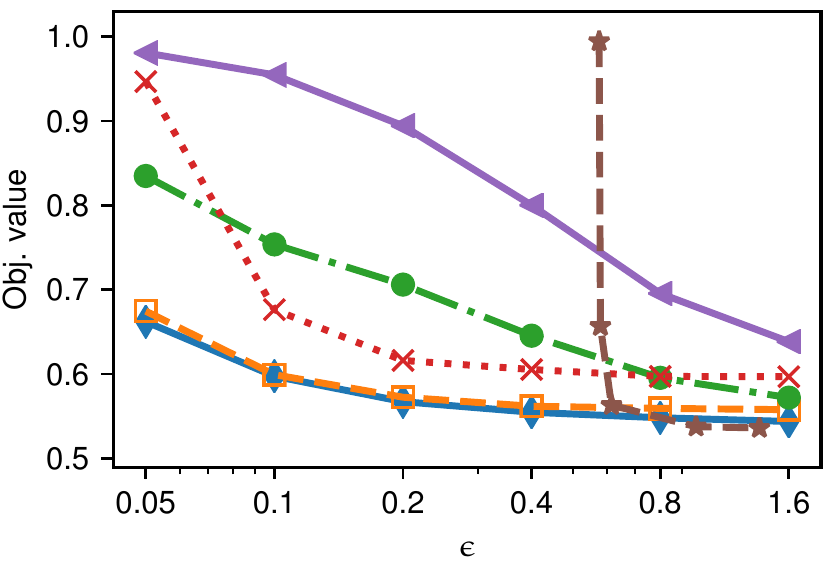}
    \caption{IPUMS-BR}
  \end{subfigure}
  \caption{SVM by varying $\epsilon$ (Top:
    classification accuracies, Bottom: objective values)}
  \label{fig:svm_4}
\end{figure*}
\begin{figure*}[tp]
  \centering
  \begin{subfigure}[t]{\textwidth}
    \centering
    \includegraphics[width=7in]{legend_cmp_all}    
  \end{subfigure}~\\
  \begin{subfigure}[t]{.49\textwidth}
    \centering
    \includegraphics[width=.49\textwidth]{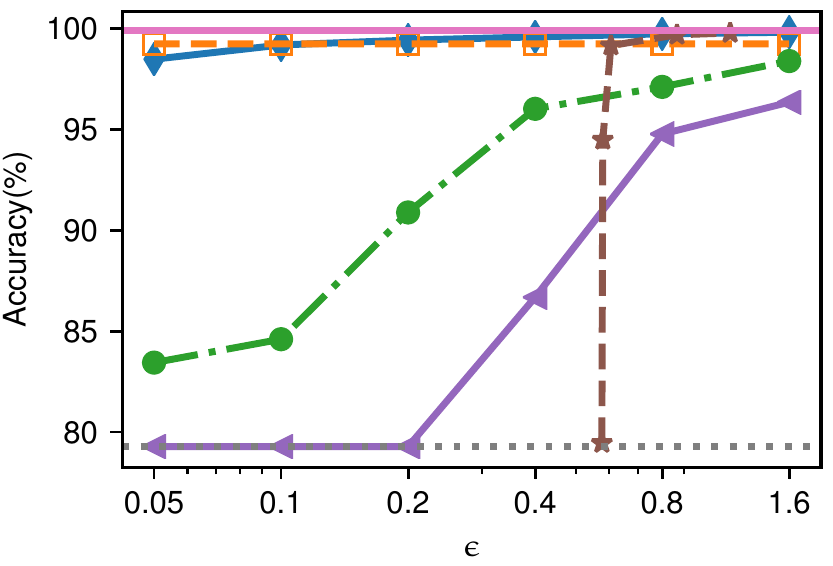}
    \includegraphics[width=.49\textwidth]{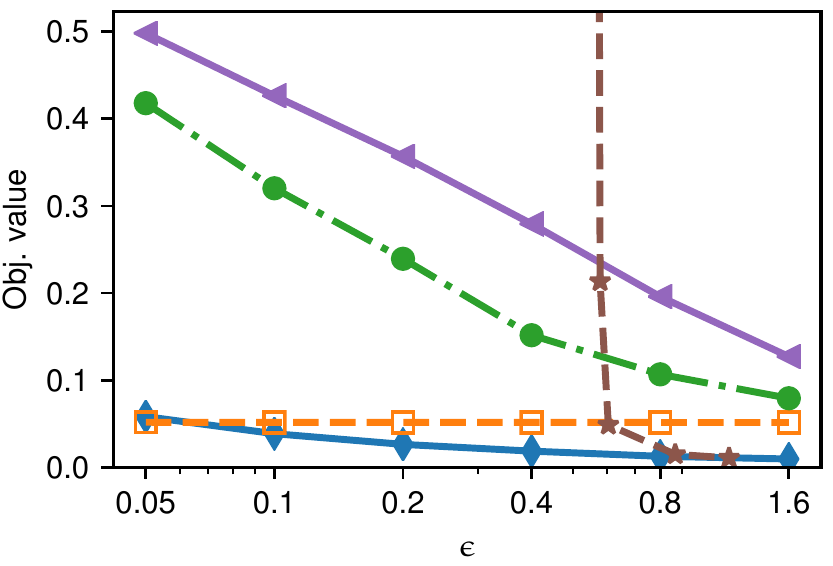}
  \end{subfigure}
  \begin{subfigure}[t]{.49\textwidth}
    \centering
    \includegraphics[width=.49\textwidth]{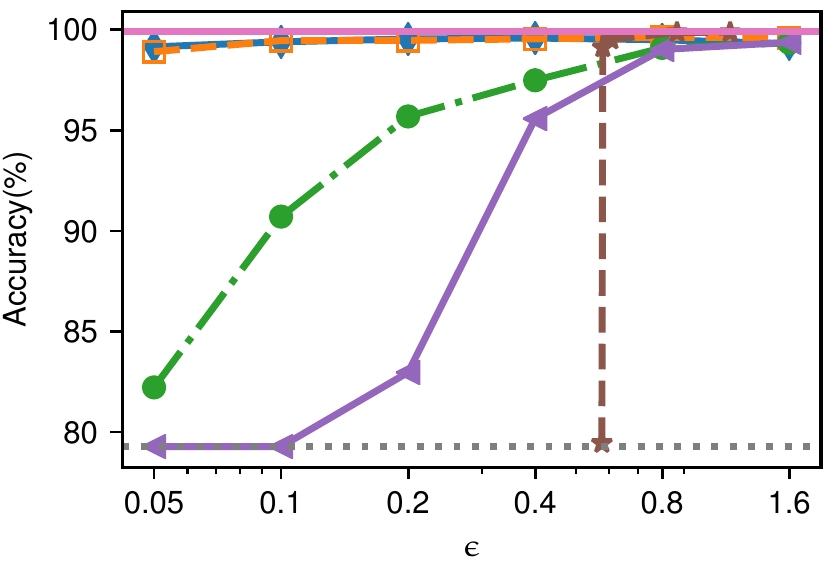}
    \includegraphics[width=.49\textwidth]{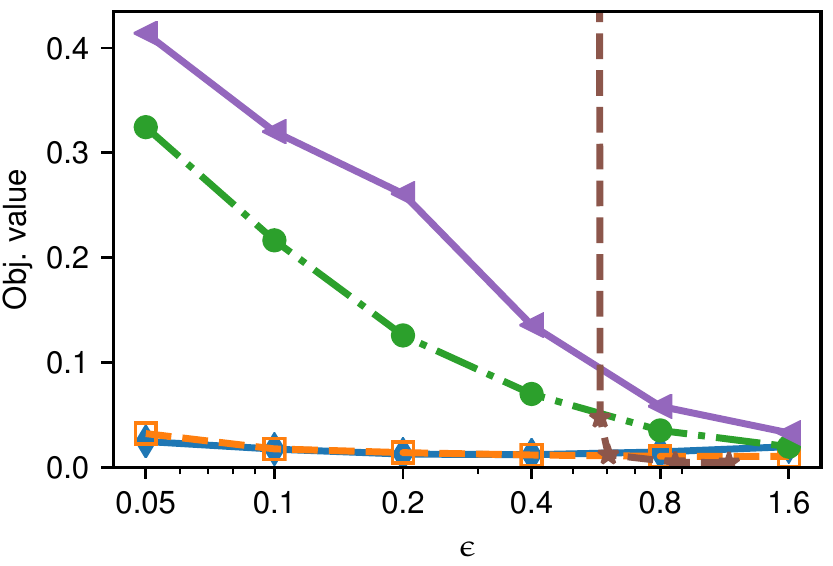}
  \end{subfigure}
  \caption{Classification task on \texttt{KDDCup99} dataset (Left:
    logistic regression, right: SVM)}
  \label{fig:kddcup99}
\end{figure*}

The top row in Figure~\ref{fig:logreg_4} shows the classification
accuracies of logistic regression model on
4 different datasets. The result on \texttt{KDDCup99} dataset is shown
in Figure~\ref{fig:kddcup99}. The proposed DP-AGD algorithm consistently
outperforms or performs 
competitively with other algorithms on a wide range of $\epsilon$
values. Especially, when $\epsilon$ is very small (e.g.,
$\epsilon=0.05$), DP-AGD outperforms all other methods except on the
BANK dataset. This is because other algorithms tend to waste privacy
budgets by obtaining extremely noisy statistics and
blindly use them in their updates without checking whether it can lead
to a better solution (i.e., they perform many updates that do not help
decrease the objective value). On the other hand, DP-AGD explicitly checks the
usefulness of the statistics and only use them when they can
contribute to decreasing the objective value.

The bottom row of Figure~\ref{fig:logreg_4} illustrates how the final
objective value achieved by each algorithm changes as the value of
$\epsilon$ increases. As it was observed in the experiments on
classification accuracy, DP-SGD gets close to the best achievable
objective values for a wide range of $\epsilon$ values considered in
the experiments.

It should be emphasized that the accuracies of SGD-Adv and SGD-MA are
largely dependent on the total number of iterations $T$. In all of the
baseline algorithms, the value of
$T$ needs to be determined before the execution of the algorithms. To
get the best 
accuracy for the given value of $\epsilon$, it requires tuning
the value of $T$ through multiple interactions with a
dataset (e.g., trial and error), which also should be done in a
differentially private manner and hence it requires a portion of
privacy budget. In PrivGene, the number of iterations is
heuristically set to $T = c\cdot (n \cdot \epsilon)$, where $c$ is a
tuning parameter and $n$ is the number of observations in
$D$. However, $T$ still requires a careful tuning as it depends on
$c$.

SGD-MA outperforms all other algorithms when $\epsilon > 0.8$. This is
because SGD-MA can afford more number of iterations resulting from tight
bound on the privacy loss provided by the moments accountant, together
with privacy amplification effect due to subsampling. However, it is
hard to use the moments accountant method under high privacy regime
(i.e., when $\epsilon$ is small) because the bound is not sharp when
there are small number of independent random variables (i.e., when the
number of iterations is small). With $\delta$ fixed to $10^{-8}$, we
empirically 
observe that, under the moments accountant, one single iteration of SGD update can incur the privacy
cost of $\epsilon\approx 0.5756$. This renders the moments accountant
method impractical when high level of privacy protection is required.

Support vector machine (SVM) is one of the most effective tools for
classification problems. In this work, we only consider the linear SVM
(without kernel) for simplicity. The SVM classification problem is
formulated as an optimization problem:
\[
  \begin{aligned}
    & \underset{\vec{w}}{\min} & &
    \frac{\lambda}{2}\norm{\vec{w}}_2^2 + \frac{1}{n} \sum_{i=1}^n \max\{1 -
    y_i\vec{w}^\intercal\vec{x}_i, 0\}\,,
  \end{aligned}
\]
where $\vec{x}_i \in \R^{p+1}$ and $y_i \in \{-1, + 1\}$ for $i \in [n]$.

Figure~\ref{fig:svm_4} compares the performance of DP-AGD on SVM task
with other baseline algorithms. As it was shown in the experiments on
logistic regression, DP-AGD achieves the competitive accuracies on a
wide range of values for $\epsilon$.

DP-AGD showed an unstable behavior when performing SVM task on
\texttt{BANK} dataset: its accuracy can degrade even though we use
more privacy budget. For example, the accuracy when $\epsilon=0.4$ is
lower than that when $\epsilon=0.1$. However, we observe that its
objective value consistently 
decreases as the value of $\epsilon$ is increased.


\section{Conclusion}
This paper has developed an iterative optimization algorithm for
differential privacy, in which the per-iteration privacy budget is
adaptively determined based on the utility of privacy-preserving
statistics.
%
Existing private algorithms lack runtime adaptivity to account for
statistical utility of intermediate query answers. To address this
significant drawback, we presented a general framework for adaptive
privacy budget selection. While the proposed algorithm has been
demonstrated in the context of private ERM problem, we believe our
approach can be easily applied to other problems.


\bibliographystyle{ACM-Reference-Format}
\bibliography{reference}

\end{document}